\def\vec#1{{\bf{#1}}}
\newtheorem{theorem}{Theorem}[section]
\newtheorem{lemma}[theorem]{Lemma}
\newtheorem{proposition}[theorem]{Proposition}
\newtheorem{corollary}[theorem]{Corollary}
\newenvironment{proof}[1][Proof]{\begin{trivlist}
\item[\hskip \labelsep {\bfseries #1}]}{\end{trivlist}}
\newcommand{\qed}{\nobreak \ifvmode \relax \else
      \ifdim\lastskip<1.5em \hskip-\lastskip
      \hskip1.5em plus0em minus0.5em \fi \nobreak
      \vrule height0.75em width0.5em depth0.25em\fi}
\ifcvprfinal\pagestyle{empty}\fi
\begin{document}

%%%%%%%%% TITLE
\title{Fast Supervised Discrete Hashing and its Analysis}

\author{Gou Koutaki\\
Kumamoto University\\
%2-39-1, Kurokami, Kumamoto, Japan\\
{\tt\small koutaki@cs.kumamoto-u.ac.jp}
% For a paper whose authors are all at the same institution,
% omit the following lines up until the closing ``}''.
% Additional authors and addresses can be added with ``\and'',
% just like the second author.
% To save space, use either the email address or home page, not both
\and
Keiichiro Shirai\\
Shinshu University\\
{\tt\small keiichi@shinshu-u.ac.jp}
\and
Mitsuru Ambai\\
Denso IT Laboratory\\
{\tt\small manbai@d-itlab.co.jp}
}

\maketitle
%\thispagestyle{empty}

%%%%%%%%% ABSTRACT
\begin{abstract}
In this paper, we propose a learning-based supervised discrete hashing method. Binary hashing is widely used for large-scale image retrieval as well as video and document searches because the compact representation of binary code is essential for data storage and reasonable for query searches using bit-operations. The recently proposed Supervised Discrete Hashing (SDH) efficiently solves mixed-integer programming problems by alternating optimization and the \textit{Discrete Cyclic Coordinate descent} (DCC) method. We show that the SDH model can be simplified without performance degradation based on some preliminary experiments; we call the approximate model for this the ``Fast SDH'' (FSDH) model. We analyze the FSDH model and provide a mathematically exact solution for it. In contrast to SDH, our model does not require an alternating optimization algorithm and does not depend on initial values. FSDH is also easier to implement than Iterative Quantization (ITQ). Experimental results involving a large-scale database showed that FSDH outperforms conventional SDH in terms of precision, recall, and computation time.
\end{abstract}

%
% ==================================================
%
% ==================================================
%

\section{Introduction}

Binary hashing is an important technique for computer vision, machine learning, and large-scale image/video/document retrieval \cite{Gionis:1999:LSH, Gong:2012_ITQ_TPAMI, BRE_NIPS2009, HashICML13a, Nguyen:2014:SDH, Shen_2015_ICCV, Shen_2015_CVPR}. Through binary hashing, multi-dimensional feature vectors with integers or floating-point elements are transformed into short binary codes.
%Since large-scale databases occupy large amounts of storage, the representation of binary code is an important technique.
This representation of binary code is an important technique since large-scale databases occupy large amounts of storage.
Furthermore, it is easy to compare a query in binary code with a binary code in a database because the Hamming distance between them can be computed efficiently by using bitwise operations that are part of the instruction set of any modern CPU \cite{Calonder:2010:BBR,Gog:2016:FCH}.

Many binary hashing methods have been proposed. \textit{Locality-sensitive hashing} (LSH) \cite{Gionis:1999:LSH} is one of most popular methods. In LSH, binary codes are generated by using a random projection matrix and thresholding using the sign of the projected data. \textit{Iterative quantization} (ITQ) \cite{Gong:2012_ITQ_TPAMI} is another state-of-the-art binary hashing method. 
In ITQ, a projection matrix of the hash function is optimized by iterating projection and thresholding procedures according to the given training samples.

Binary hashing can be roughly classified into two types: unsupervised hashing \cite{BRE_NIPS2009, HGraph:ICML11, HGraph:NIPS2014, Shen_2015_ICCV, HEO:SH_TPAMI2015, Weiss:SpectralH_NIPS2008} and supervised hashing.
% ↓意味合っていますか？
Supervised hashing uses learning label information if it exists. 
In general, supervised hashing yields better performance than unsupervised hashing, so in this study, we target supervised hashing.
%Although LSH and ITQ are unsupervised methods, they can be converted into supervised methods by imposing label information on feature vectors.
In addition, some unsupervised methods such as LSH and ITQ can be converted into supervised methods by imposing label information on feature vectors. For example, canonical correlation analysis (CCA) \cite{CCA} can transform feature vectors to maximize inter-class variation and minimize intra-class variation according to label information. Hereafter, we call these processes CCA-LSH and CCA-ITQ, respectively.

Not imposing label information on feature vectors, such as in CCA, but imposing it directly on hash functions has been proposed. \textit{Kernel-based supervised hashing} (KSH) \cite{CVPR12:KSH} uses spectral relaxation to optimize the cost function through a sign function. Feature vectors are transformed by kernels during preprocessing. KSH has also been improved to 
\textit{kernel-based supervised discrete hashing} (KSDH) \cite{ECCV15:KSDH}. It relaxes the discrete hashing problem through linear relaxation. 
% ↓意味合っていますか？
\textit{Supervised Discriminative Hashing} \cite{Nguyen:2014:SDH} decomposes training samples into inter and intra samples. \textit{Column sampling-based discrete supervised hashing} (COSDISH) \cite{COSDISH:AAAI16} uses column sampling based on semantic similarity, and decomposes the problem into a sub-problem to simplify solution.

The optimization of binary codes leads to a mixed-integer programming problem involving integer and non-integer variables, which is an NP-hard problem in general \cite{Shen_2015_CVPR}. Therefore, many methods discard the discrete constraints, or transform the problem into a relaxed problem, i.e., a linear programming problem \cite{Schrijver:1986}. This relaxation significantly simplifies the problem, but is known to affect classification performance \cite{Shen_2015_CVPR}.

Recent research has introduced a type of \textit{supervised discrete hashing} (SDH) \cite{Shen_2015_CVPR,Wang_2016_CVPR} that directly learns binary codes without relaxation. SDH is a state-of-the-art method because of its ease of implementation, reasonable computation time for learning, and better performance over other state-of-the-art supervised hashing methods. To solve discrete problems, SDH uses a \textit{discrete cyclic coordinate descent} (DCC) method, which is an approximate solver of 0-1 quadratic integer programming problems. 

%
% --------------------------------------------------
%

\subsection{Contributions and advantages}

In this study, we first analyze the SDH model and point out that it can be simplified without performance degradation based on some preliminary experiments. 
We call the approximate model the \textit{fast SDH} (FSDH) model. We analyze the FSDH model and provide a mathematically exact solution to it. The model simplification is validated through experiments involving several large-scale datasets.

The advantages of the proposed method are as follows:
\begin{itemize}
\setlength{\topsep}{0pt}
\setlength{\parskip}{0.5\baselineskip}
\setlength{\itemsep}{0pt}
\item Unlike SDH, it does not require alternating optimization or hyper-parameters, and is not initial value-dependent.
\item It is easier to implement than ITQ and is efficient in terms of computation time. 
FSDH can be implemented in three lines on MATLAB.
\item High bit scalability: its learning time and performance do not depend on the code length.
\item It has better precision and recall than other state-of-the-art supervised hashing methods.
\end{itemize}
%

%
% --------------------------------------------------
%

\subsection{Related work}

As described subsequently, the SDH model poses a matrix factorization problem: $\vec{F}=\vec{W}^{\top}\vec{B}$. 
The popular form of this problem is singular value decomposition (SVD) \cite{Golub:1996}, 
and when $\vec{W}$ and $\vec{B}$ are unconstrained, the Householder method is used for computation. 
When $\vec{W} \geq 0$, non-negative matrix factorization (NMF) is used \cite{NFM:NPIS2005}.

In the case of the SDH model, $\vec{B}$ is constrained to $\{-1,1\}$ and $\vec{W}$ 
is unconstrained. In a similar problem setting, Slawski \textit{et al.} proposed matrix 
factorization with binary components \cite{SlawskiHL13} and showed an application to DNA analysis for cancer research. 
$\vec{B}$ is constrained to $\{0,1\}$, and indicates Unmethylated/Methylated DNA sequences. 
Furthermore, a similar model has been proposed in display electronics. 
Koutaki proposed binary continuous decomposition for multi-view displays \cite{Koutaki:2016}. 
In this model, multiple images $\vec{F}$ are decomposed into binary images $\vec{B}$ 
and a weight matrix $\vec{W}$. An image projector projects binary 0-1 patterns through digital mirror devices (DMDs), 
and the weight matrix corresponds to the transmittance of the LCD shutter.

%
% ==================================================
%
% ==================================================
%

\section{Supervised Discrete Hashing (SDH) Model}

In this section, we introduce the \textit{supervised discrete hashing} (SDH) 
model.
%$N$ training samples $\vec{X} = \left[\vec{x}_1,\ldots,\vec{x}_N\right]$ and $\vec{x}\!\in\!\mathbb{R}^{M}$ are used as a feature vector. 
Let $\vec{x}_i \!\in\! \mathbb{R}^M$ be a feature vector, and introduce a set of $N\, (\ge\! M)$ training samples $\vec{X} = [\vec{x}_1,\ldots,\vec{x}_N] \in \mathbb{R}^{M \times N}$.
Then, consider binary label information $\vec{y}_i\!\in\!\{0,1\}^{C}$ corresponding to $\mathbf{x}_i$, where $C$ is the number of categories to 
classify. Setting the $k$-th element to 1, $[\vec{y}_i]_k = 1$, and the other elements to 0 indicates that the $i$-th vector belongs to class $k$. 
By concatenating $N$ samples of $\vec{y}_i$ horizontally, 
a label matrix  $\vec{Y}=\left[\vec{y}_1,\ldots,\vec{y}_N\right] \in \{0,1\}^{C \times N}$ is constructed.

%
% --------------------------------------------------
%

\subsection{Binary code assignment to each sample}

For each sample $\vec{x}_i$, an $L$-bit binary code $\vec{b}_i \!\in\! \{-1,1\}^L$ is assigned.
By concatenating $N$ samples of $\vec{b}_i$ horizontally, a binary matrix 
$\vec{B}=[\vec{b}_1,\ldots,\vec{b}_N] \in \{-1,1\}^{L \times N}$
is constructed.
The binary code $\vec{b}_i$ is computed as
\begin{equation}
\vec{b}_i = \text{sgn}\left(\vec{P}^\top \vec{x}_i \right),
\label{eq:Px}
\end{equation}
where $\vec{P} \!\in\! \mathbb{R}^{M \times L}$
(therefore $\vec{P}^\top \!\in\! \mathbb{R}^{L \times M}$) is a linear transformation matrix and $\text{sgn}(\cdot)$ is the sign function. The major aim of SDH is to determine the matrix $\vec{P}$ from training samples $\vec{X}$. In practice, feature vectors $\{\vec{x}_i\}$ are transformed by preprocessing. Therefore, we denote the original feature vectors $\vec{x}^{ori}_i$ and the transformed feature vectors $\vec{x}_i$.

%
% --------------------------------------------------
%

\subsection{Preprocessing: Kernel transformation}

The original feature vectors of training samples 
$\vec{x}^{ori}_i\ (i=1,\ldots,N)$ 
are converted into the feature vectors 
$\vec{x}_i \in \mathbb{R}^{M}$ using the following 
kernel transformation $\Phi$:
\begin{equation}
\scalebox{0.9}{$
\begin{aligned}
& \vec{x}_i = \Phi(\vec{x}^{ori}_i)\\
&= \left[
\exp\! \left(-\frac{\|\vec{x}^{ori}_i \! - \vec{a}_1 \|^2}{\sigma}\right),
\ldots,
\exp\! \left(-\frac{\|\vec{x}^{ori}_i \! - \vec{a}_m \|^2}{\sigma}\right)
\right]^\top,
\end{aligned}
$}
\end{equation}
where $\vec{a}_m$ is an anchor vector obtained by randomly 
sampling the original feature vectors, 
$\vec{a}_m = \vec{x}^{ori}_{rand}$. 
Then, the transformed feature vectors are bundled into 
the matrix form  $\vec{X}=[\vec{x}_1,\ldots,\vec{x}_N]$.

%
% --------------------------------------------------
%

\subsection{Classification model} \label{subsec:class}

Following binary coding by \eqref{eq:Px}, we suppose that a good binary code classifies the class, and formulate the following simple linear classification model:
\begin{equation}
\widehat{\vec{y}}_i = \vec{W}^\top \vec{b}_i,
\end{equation}
where $\vec{W}\!\in\!\mathbb{R}^{L \times C}$ is a weight matrix and $\widehat{\vec{y}}_i$ is an estimated label vector. As mentioned above, its maximum index, $\arg\min_k [\widehat{\vec{y}}_i]_k$, indicates the assigned class of $\vec{x}_i$.

%
% --------------------------------------------------
%

\subsection{Optimization of SDH}

The SDH problem is defined as the following minimization problem:
\begin{equation}
\scalebox{0.95}{$
\begin{aligned}
\min_{\vec{B},\vec{W},\vec{P}} \|\vec{Y}-\vec{W}^\top \vec{B}\|^2 + \lambda \|\vec{W}\|^2 + \nu \|\vec{B}-\vec{P}^\top \vec{X}\|^2,
\label{eq:SDH}
\end{aligned}
$}
\end{equation}
where $\|\cdot\|$ is the Frobenius norm, and $\lambda \geq 0$ and $\nu \geq 0$ are balance parameters.
The first term includes the classification model explained in Sec.~\ref{subsec:class}.
The second term is a regularizer for $\vec{W}$ to avoid overfitting.
The third term indicates the fitting errors due to binary coding.

In this optimization, it is sufficient to compute $\vec{P}$, \textit{i.e.}, if $\vec{P}$ is obtained, 
$\vec{B}$ can be obtained by \eqref{eq:Px}, and
$\vec{W}$ can be obtained from the following simple least squares equation:
\begin{equation}
\vec{W}=\left(\vec{B}\vec{B}^\top + \lambda \vec{I}\right)^{-1}\vec{B}\vec{Y}^\top.
\label{eq:W}
\end{equation}
However, due to the difficulty of optimization, the optimization problem of \eqref{eq:SDH} is usually divided into three sub-problems of the optimization of $\vec{B}, \vec{W}$, and $\vec{P}$. Thus, the following alternating optimization is performed:

%\paragraph{(i) Initialization}
\vspace{0.5\baselineskip}
\noindent(i) \textbf{Initialization}:
$\vec{B}$ is initialized, usually randomly.

%\paragraph{(ii) F-Step}
\vspace{0.5\baselineskip}
\noindent(ii) \textbf{F-Step}:
$\vec{P}$ is computed by the following simple least squares method:
\begin{equation}
\vec{P}=\left(\vec{X}\vec{X}^\top\right)^{-1}\vec{X}\vec{B}^\top.
\label{eq:P}
\end{equation}
%

%\paragraph{(iii) }
\vspace{0.5\baselineskip}
\noindent(iii) \textbf{W-Step}:
$\vec{W}$ is computed by \eqref{eq:W}.

%\paragraph{(iv) B-Step}
\vspace{0.5\baselineskip}
\noindent(iv) \textbf{B-Step}:
After fixing $\vec{P}$ and $\vec{W}$, equation~\eqref{eq:SDH} becomes:
\begin{equation}
\begin{aligned}
\min_{\vec{B}} \ & \|\vec{Y}\|^2 - 2\mathrm{Tr}\left(\vec{Y}\vec{W}^\top\vec{B}\right)
+\mathrm{Tr}\left(\vec{B}^\top\vec{W}\vec{W}^\top\vec{B}\right) \\
&+ \nu \left(\|\vec{B}\|^2 - 2\mathrm{Tr}\left(\vec{P}^\top\vec{X}\vec{B}\right) + \|\vec{P}^\top\vec{X}\|^2\right)\\
\Rightarrow \ &\min_{\vec{B}} \mathrm{Tr}\left(\vec{B}^\top\vec{Q}\vec{B} + \vec{F}^\top\vec{B}\right),
\end{aligned}
\end{equation}
where
\begin{equation}
\vec{Q}\!=\!\vec{W}\vec{W}^\top \!\in\! \mathbb{R}^{L\times L}, \
\vec{F}\!=\!-2\left(\vec{W}\vec{Y}+\nu\vec{P}^\top\vec{X}\right) \!\in\! \mathbb{R}^{L \times N}.
\label{eq:F}
\end{equation}
Note that $\mathrm{Tr}\left(\vec{B}^\top\vec{B}\right) = LN$. The trace can be rewritten as
\begin{equation}
\min_{\{\vec{b}_i\}} \ \sum_{i=1}^N \vec{b}_i^\top\vec{Q}\vec{b}_i + \vec{f}_i^\top\vec{b}_i,
\end{equation}
where $\vec{f}_i \in \mathbb{R}^L$ is the $i$-th column vector of $\vec{F}$. $\{\vec{b}_i\}$ are actually independent of one another. Therefore, it reduces to the following 0-1 integer quadratic programming problem for each $i$-th sample:
\begin{equation}
\forall_i \ \min_{\vec{b}_i \in \{-1,1\}^L} \vec{b}_i^\top\vec{Q}\vec{b}_i + \vec{f}_i^\top\vec{b}_i.
\label{eq:qeq}
\end{equation}

\paragraph{(v)} Iterate steps (ii)$\sim$(iv) until convergence.

%
% ==================================================
%
% ==================================================
%

\section{Discussion of the SDH Model}

%
% --------------------------------------------------
%

\subsection{0-1 integer quadratic programming problem}

\paragraph{DCC method} To solve \eqref{eq:qeq}, 
SDH uses a \textit{discrete cyclic coordinate descent} (DCC) method. In this method, a one-bit element of $\vec{b}_i$ is optimized while fixing the other $L-1$ bits; the $l$-th bit $b_l$ is optimized as
\begin{equation}
b_l = -\text{sgn}\biggl( 2 \sum_{i \neq l} Q_{i,l}b_i + f_l \biggr).
\end{equation}
Then, all bits $l=1,\ldots,L$ are optimized, and this procedure is repeated several times.
%Because the DCC method is too greedy, it can easy fall in the local minima.
In addition, the DCC method is prone to result in a local minimum because of its greediness.
To improve it, Shen \textit{et al.} proposed using a proximal operation of convex optimization \cite{TIP2016binary}.

\paragraph{Branch-and-bound method}
In the case of a large number of bits $L \geq 32$, solving \eqref{eq:qeq} exactly is difficult because this problem is NP-hard. 
However, there exist a few efficient methods to solve the 0-1 
integer quadratic programming problem. 
In \cite{Koutaki:2016}, Koutaki used 
a \textit{branch-and-bound} method to solve the problem. 
$\vec{b}$ is expanded into a binary tree of depth $L$, and 
the problem of \eqref{eq:qeq} is divided into 
a sub-problem by splitting $\vec{b}=[\vec{b}_1^\top,\vec{b}_2^\top]^\top$. At each node, the lower bound is computed and compared with the given best solution; child nodes can be excluded from the search.

The computation of the lower bound depends on the structure of $\vec{Q}, \vec{q}$, and $\vec{b}$. To compute the lower bound in general, 
the linear relaxation method is a standard method, 
$\vec{b}\in \{-1,1\}^L \Rightarrow \vec{b}\in [-1,1]^L$.  In this case, the rough lower bound of the quadratic term in \eqref{eq:qeq} can be provided by the minimum eigenvalues of $\vec{Q}$. However, linear relaxation is useless in the SDH model because $L > C$ in general, so the matrix $\vec{Q}=\vec{W}\vec{W}^\top$ is rank deficient and, as a result, the minimum eigenvalue of $\vec{Q}$ becomes zero.

Even if we can obtain an efficient algorithm, such as branch-and-bound and good lower bound, in the application of binary hashing, we still suffer from computational difficulties because code lengths $L=64, 128$, or $256$ bits are still too long to optimize, and they are used frequently.
\begin{figure}[t]
\centering
\includegraphics[width=7cm]{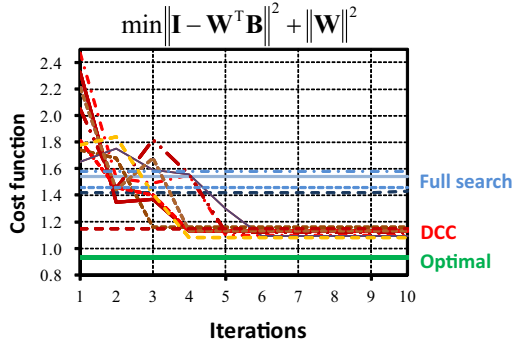}
\caption{Convergence of optimizations with several initial conditions. Even though the problem is simple, conventional solvers with alternating optimization (DCC and full search) cannot reach the optimal solution (green line) and fall into local minima.}
\label{fig:Optimal}
\end{figure}

%
% --------------------------------------------------
%

\subsection{Alternating optimization and initial value dependence}

%Even if we can optimize \eqref{eq:qeq}, we do not necessarily obtain global optimal binary codes $\vec{B}$
Even if we optimize the binary optimization in \eqref{eq:qeq}, the resulting binary codes $\vec{B}$ are not always optimal ones because they depend on the other fixed variables 
$\vec{W}$ and $\vec{P}$.
In addition, alternating optimization is prone to cause a serious problem: a solution depends on the initial values, and may fall in a local minimum during the iterations, even if each step of \textbf{F-Step}, \textbf{W-Step} and \textbf{B-Step} provides the optimal solution.

Figure~\ref{fig:Optimal} shows an example of the optimization result for a simple version of the SDH model in \eqref{eq:SDH} 
with a small number of bits $(L,C,N) \!=\! (16,10,10)$. 
In this case, an exact solution is known and its minimum value is $0.94$ (green line in Fig.~\ref{fig:Optimal}). DCC (red lines) provides results for 10 randomized initial conditions. The full search (blue lines) provides the results of an exact full search in \textbf{B-Step}, 
where $2^{16} = 65,536$ nodes are searched.

In spite of the small size of the problem, the cost function of conventional alternating solvers (DCC and full search) cannot find the exact value, and depends on initial values.
%, and falls into a local minimum.
Interestingly, the results of full search immediately fall into a local minimum, and are worse than those of DCC.

%
% ==================================================
%
% ==================================================
%

\section{Proposed Fast SDH Model}
We introduce a new hashing model by approximating the SDH model, which utilizes the following assumptions:
\begin{description}
\setlength{\topsep}{0pt}
\setlength{\parskip}{0.7\baselineskip}
\setlength{\itemsep}{0pt}
\item[A1:] The number of bits $L$ of the binary code is a power of 2: $L = 2^l$.
\item[A2:] The number of bits is greater than the number of classes: $L \geq C$.
\item[A3:] Single-labeling problem.
\item[A4:] $\|\vec{W}^\top\vec{Y}\|^2 \gg \nu \|\vec{P}^\top\vec{X}\|^2$ in \eqref{eq:F}.
\end{description}
Note that assumptions \textbf{A1}$\sim$\textbf{A3} also become the limitations of the proposed model. In \textbf{A4}, 
SDH recommends that the parameter $\nu$ be set to a very small value, such as $\nu = 10^{-5}$ \cite{Shen_2015_CVPR}. 
In practice,  $\|\vec{W}^\top\vec{Y}\|^2 \fallingdotseq 31.53$ and 
$\nu \|\vec{P}^\top\vec{X}\|^2 \fallingdotseq 0.013$ in the CIFAR-10 dataset. Furthermore, when $\nu=0$, almost the same results can be obtained in all datasets as shown in the experimental results in Sec.~\ref{sec:results}.
We call this approximation using $\nu=0$ the ``\textit{fast SDH} (FSDH) approximation.''

Using the FSDH approximation, we solve the following problem 
for each $N$-sample $\vec{b}_i$ in \textbf{B-Step}:
\begin{equation}
\begin{aligned}
\forall_i \ & \min_{\vec{b}_i \in \{-1,1\}^L} \ \vec{b}_i^\top\vec{Q}\vec{b}_i + \vec{f}_i^\top\vec{b}_i,\\
& \vec{Q}=\vec{W}\vec{W}^\top, \quad \vec{F}=-2\vec{W}^\top\vec{Y},
\end{aligned}
\label{eq:fqeq}
\end{equation}
where $\vec{Q}$ is a constant matrix and $\vec{f}_i$ depends on label $\vec{y}_i$. By using the single-label assumption  in \textbf{A3}, 
the number of kinds of $\vec{y}_i$ is limited to $C$: 
\begin{equation}
\vec{y}_1 = [1,0,\ldots,0]^\top, \ldots, \vec{y}_C = [0,0,\ldots,1]^\top.
\label{eq:labelapprox}
\end{equation}
Thus, it is sufficient to solve only $C$ integer quadratic programming problems of \eqref{eq:fqeq} from $N$. 
In general, the number of samples $N$ is larger than that of classes: $N \gg C$, \textit{e.g.}, $N=59,000$ and $C=10$. Thereby, the computational cost of \textbf{B-Step} becomes $5,900$ times lower.
In other words, the FSDH approximation proposes the following:

\begin{proposition}
The FSDH approximation defines the SDH model to assign a binary code to each class.
\label{prop:FSDH}
\end{proposition}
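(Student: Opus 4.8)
The plan is to show that, under the FSDH approximation, the $N$ integer subproblems of \textbf{B-Step} collapse into $C$ subproblems indexed by class, so that the binary code a sample receives is a function of its label alone. First I would fix $\vec{W}$, hence $\vec{Q}=\vec{W}\vec{W}^\top$, and read off \eqref{eq:fqeq}: with $\nu=0$ the quadratic matrix $\vec{Q}$ carries no dependence on $i$, and the only $i$-dependence left is in the linear term $\vec{f}_i$, the $i$-th column of $\vec{F}=-2\vec{W}^\top\vec{Y}$. Since the $i$-th column of $\vec{Y}$ is $\vec{y}_i$, the vector $\vec{f}_i$ depends on $i$ only through the label $\vec{y}_i$.

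Next I would invoke the single-label assumption \textbf{A3}: every $\vec{y}_i$ equals one of the $C$ standard basis vectors in \eqref{eq:labelapprox}. Hence $\vec{f}_i$ takes at most $C$ distinct values $\vec{g}_1,\dots,\vec{g}_C$, where $\vec{g}_k$ is the value occurring when $\vec{y}_i$ is the $k$-th basis vector. The family \eqref{eq:fqeq} of $N$ integer quadratic programs is therefore, as a set of distinct problems, just the $C$ programs
\begin{equation}
\vec{b}^{(k)} \in \arg\min_{\vec{b}\in\{-1,1\}^L}\ \vec{b}^\top\vec{Q}\vec{b}+\vec{g}_k^\top\vec{b},\qquad k=1,\dots,C,
\end{equation}
each of which admits a minimizer because $\{-1,1\}^L$ is finite (with ties broken by any fixed rule). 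Two samples sharing a label share the objective, hence share a minimizer; so the assignment $i\mapsto\vec{b}_i$ factors through the label, $\vec{b}_i=\vec{b}^{(k)}$ whenever $\vec{x}_i$ belongs to class $k$. This is exactly the statement that FSDH assigns one binary code per class, which proves Proposition~\ref{prop:FSDH}.

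Finally I would record the converse reading, which is the real payoff of the proposition: once a codebook $\{\vec{b}^{(1)},\dots,\vec{b}^{(C)}\}\subset\{-1,1\}^L$ is chosen and propagated to all samples to form $\vec{B}$, the matrices $\vec{W}$ and $\vec{P}$ are recovered in closed form from \eqref{eq:W} and \eqref{eq:P}; thus the FSDH model is genuinely a model over $C$ codewords rather than over $N$ codes, and the $\approx N/C$-fold speedup of \textbf{B-Step} is immediate. I do not expect a genuine obstacle here --- the content is structural bookkeeping rather than analysis, and the only point needing care is well-definedness: the per-class minimizers exist (finiteness of the feasible set) and samples with a common label cannot be forced to disagree (only the label enters the objective). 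It is worth noting explicitly that assumptions \textbf{A1} and \textbf{A2} are not used for this proposition; only the single-label hypothesis \textbf{A3} and the $\nu=0$ approximation licensed by \textbf{A4} are needed.
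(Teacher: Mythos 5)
Your argument is correct and is essentially the paper's own: under the $\nu=0$ approximation the quadratic term $\vec{Q}=\vec{W}\vec{W}^\top$ is sample-independent and $\vec{f}_i$ depends on $i$ only through the label $\vec{y}_i$, which by \textbf{A3} takes just $C$ values, so the $N$ subproblems in \eqref{eq:fqeq} collapse to $C$ per-class problems. Your added remarks on existence of minimizers, tie-breaking, and which assumptions are actually used are harmless refinements of the same reasoning, not a different route.
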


After obtaining the binary codes of each class 
$\vec{B}^\prime = \left[\vec{b}^\prime_1,\ldots,\vec{b}^\prime_C\right] \in \{-1,1\}^{L \times C}$, 
the binary codes of all samples $\vec{B}$ can be 
constructed by lining up $\vec{b}_i^\prime$ as
\begin{equation}
\vec{B} = \left[\vec{b}^\prime_{y_1},\ldots,\vec{b}^\prime_{y_N}\right].
\end{equation}
After constructing $\vec{B}$, the projection 
matrix $\vec{P}$ can be obtained by \eqref{eq:P}.

%
% --------------------------------------------------
%

\subsection{Analytical solutions of FSDH model}

From Proposition~\ref{prop:FSDH}, we found that it is sufficient to determine the binary code for each class. 
Furthermore, we can choose the optimal binary codes under the FSDH approximation as follows:
\begin{lemma}
If $f(x_i)$ is convex, the solution of
\begin{equation}
\min_{\{x_i\}} \sum_i^N f(x_i) \quad s.t. \ \sum_i^N x_i=L
\end{equation}
is given by the mean value $x_i = L / N\ (i=1,\ldots,N)$.
\label{lemma:conven}
\end{lemma}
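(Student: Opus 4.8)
The plan is to reduce the statement to a one-line application of Jensen's inequality. Since $f$ is convex, for any feasible $\{x_i\}$ satisfying $\sum_{i=1}^N x_i = L$ we have
\[
\frac{1}{N}\sum_{i=1}^N f(x_i) \;\ge\; f\!\left(\frac{1}{N}\sum_{i=1}^N x_i\right) \;=\; f\!\left(\frac{L}{N}\right),
\]
and hence $\sum_{i=1}^N f(x_i) \ge N f(L/N)$. This lower bound does not depend on the particular feasible point chosen, and it is attained by the uniform assignment $x_i = L/N$, which is itself feasible because $\sum_{i=1}^N L/N = L$. Therefore $x_i = L/N$ is a minimizer, which is what we want for the subsequent construction of the per-class binary codes.

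The only point that needs a word of care is whether the mean value is \emph{the} solution or merely \emph{a} solution. If $f$ is strictly convex, equality in Jensen's inequality forces all the $x_i$ to coincide, so the minimizer is unique and equals $L/N$; if $f$ is convex but not strictly so, other minimizers may exist, but the uniform assignment remains optimal, which suffices. I would state this explicitly so there is no ambiguity in how the lemma is later invoked.

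As an optional cross-check I would note the Lagrangian viewpoint: introducing a multiplier $\mu$ for the constraint, the stationarity condition reads $f'(x_i) = \mu$ for every $i$, which is satisfied by $x_i = L/N$, and convexity of $f$ guarantees that any such stationary point is a global minimum of the (linearly) constrained problem. There is no real obstacle here beyond being precise about uniqueness, so I would keep the proof to the two-line Jensen argument plus the remark on strict versus non-strict convexity.
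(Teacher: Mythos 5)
Your proof is correct, and it is genuinely different from (and simpler than) the one in the paper. The paper does not apply Jensen's inequality to $f$ directly: it works through a first-order argument, requiring the gradient $[f'(x_1),\ldots,f'(x_N)]^\top$ to be parallel to the normal vector $\tfrac{1}{\sqrt N}[1,\ldots,1]^\top$ of the constraint surface, turning that condition into the equality $\tfrac{1}{N}\sum_i \sqrt{f'(x_i)^2} = \sqrt{\tfrac{1}{N}\sum_i f'(x_i)^2}$, recognizing this as the equality case of Jensen for the square root applied to $\{f'(x_i)^2\}$, and then invoking injectivity of $f'$ (which needs $f$ convex, differentiable, and an extra assumption that $f'$ does not change sign on the feasible range) to conclude $x_1=\cdots=x_N$. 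Your route applies Jensen's inequality once to $f$ itself: $\sum_i f(x_i) \ge N f\bigl(\tfrac{1}{N}\sum_i x_i\bigr) = N f(L/N)$, with the bound attained at the feasible point $x_i = L/N$. This buys you a shorter argument that needs no differentiability, no stationarity condition, and none of the paper's sign restriction on $f'$, and it directly certifies global optimality rather than only identifying a critical point. Your caveat about uniqueness is also the right one to raise: the lemma as stated says ``the solution,'' which strictly holds only when equality in Jensen forces $x_1=\cdots=x_N$ (e.g.\ $f$ strictly convex); in the paper's application $f(\sigma)=\lambda/(\sigma+\lambda)$ is strictly convex on the relevant range, so the uniform assignment is indeed the unique minimizer and your remark closes the gap cleanly. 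The only thing your write-up leaves implicit, and which the paper's version also glosses over, is that for merely convex $f$ the correct statement is ``a minimizer,'' which is all that the subsequent Hadamard-matrix construction actually requires.
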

\begin{proof}
See Appendix~\ref{sec:appendix_Lemma}.
\end{proof}

\begin{theorem}
An analytical solution of FSDH $\vec{B}^\prime$ is obtained as a Hadamard matrix.
\end{theorem}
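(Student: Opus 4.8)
The plan is to eliminate $\vec{W}$ from the FSDH objective and reduce the remaining minimization over $\vec{B}^\prime$ to a one-variable-per-class problem that Lemma~\ref{lemma:conven} settles. By Proposition~\ref{prop:FSDH} and assumption \textbf{A3}, the matrix $\vec{Y}$ has only standard basis vectors as columns, so $\vec{B}=\vec{B}^\prime\vec{Y}$; with $\nu=0$ the $\vec{P}$--term of \eqref{eq:SDH} drops out, and assembling the $C$ per-class programs \eqref{eq:fqeq} together with the regularizer shows that FSDH amounts to minimizing, jointly over $\vec{B}^\prime\in\{-1,1\}^{L\times C}$ and $\vec{W}\in\mathbb{R}^{L\times C}$, a positive multiple of $\|\vec{I}_C-\vec{W}^\top\vec{B}^\prime\|^2+\lambda\|\vec{W}\|^2$ (with equal class sizes; for unequal sizes the weighting $\vec{D}=\vec{Y}\vec{Y}^\top$ appears, which the unit-weighted B-step of FSDH effectively suppresses). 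For fixed $\vec{B}^\prime$ this is a ridge regression that splits into $C$ independent problems, with total optimal value $\lambda\,\mathrm{Tr}\bigl((\vec{G}+\lambda\vec{I}_C)^{-1}\bigr)=\lambda\sum_{j=1}^C(\mu_j+\lambda)^{-1}$, where $\mu_1,\dots,\mu_C\ge 0$ are the eigenvalues of the Gram matrix $\vec{G}:=\vec{B}^{\prime\top}\vec{B}^\prime$; as $\lambda\to 0^+$ this is governed by $\mathrm{Tr}(\vec{G}^{-1})=\sum_j\mu_j^{-1}$, the squared norm of the minimum-norm $\vec{W}=\vec{B}^\prime\vec{G}^{-1}$ that realizes the exact fit $\vec{W}^\top\vec{B}^\prime=\vec{I}_C$.

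Next I would invoke the one structural fact about $\vec{B}^\prime\in\{-1,1\}^{L\times C}$ that matters here: every column has squared Euclidean norm $L$, hence $\vec{G}$ has constant diagonal $L$ and $\sum_{j=1}^C\mu_j=\mathrm{Tr}(\vec{G})=CL$ for every admissible $\vec{B}^\prime$. Since $x\mapsto x^{-1}$, and more generally $x\mapsto(x+\lambda)^{-1}$, is convex on $(0,\infty)$, Lemma~\ref{lemma:conven} applied with the $C$ variables $\mu_1,\dots,\mu_C$ under the constraint $\sum_j\mu_j=CL$ shows that the objective is bounded below by its value at $\mu_j=CL/C=L$ for all $j$, i.e.\ when $\vec{G}=L\vec{I}_C$. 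A symmetric positive-semidefinite matrix all of whose eigenvalues equal $L$ is exactly $L\vec{I}_C$, which says precisely that $\vec{b}^\prime_1,\dots,\vec{b}^\prime_C$ are pairwise orthogonal $\pm1$ vectors of length $L$; equivalently, $\vec{B}^\prime$ is formed from $C$ columns of an order-$L$ Hadamard matrix (a full Hadamard matrix when $C=L$).

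Finally I would verify that this lower bound is attained, which is exactly where \textbf{A1} and \textbf{A2} are used. Because $L=2^l$, the Sylvester construction $\vec{H}_{2^l}=\vec{H}_2^{\otimes l}$, with $\vec{H}_2$ the $2\times2$ sign matrix having rows $(1,1)$ and $(1,-1)$, produces an $L\times L$ matrix of $\pm1$ entries with $\vec{H}_{2^l}\vec{H}_{2^l}^\top=L\vec{I}_L$, and since $L\ge C$ by \textbf{A2}, any $C$ of its columns form an admissible $\vec{B}^\prime$ meeting the bound. Hence an analytical minimizer of FSDH is such a $\vec{B}^\prime$, and the rest follows in closed form without any alternating iteration: $\vec{W}$ from \eqref{eq:W} — which, for balanced classes, is simply a scalar multiple of $\vec{B}^\prime$ once $\vec{G}=L\vec{I}_C$ — then $\vec{B}=\vec{B}^\prime\vec{Y}$ and $\vec{P}$ from \eqref{eq:P}. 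The step I expect to be the main obstacle is the first one: keeping the ridge/trace algebra honest — in particular dealing with, or explicitly assuming away, the class-size weighting $\vec{D}$ so that the per-class Gram spectrum is genuinely what gets optimized — together with the small but essential observation that the equal-eigenvalue configuration singled out by the convexity lemma is actually realizable over $\{-1,1\}$ precisely when $L$ is a power of two.
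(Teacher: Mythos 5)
Your proposal is correct and follows essentially the same route as the paper: eliminate $\vec{W}$ by ridge regression to get the objective $\sum_{j}\lambda/(\mu_j+\lambda)$ over the eigenvalues of $\vec{B}^{\prime\top}\vec{B}^\prime$ with the trace constraint $\sum_j\mu_j=LC$, then apply Lemma~\ref{lemma:conven} to force all eigenvalues equal to $L$, i.e.\ $\vec{B}^\prime$ consists of $C$ orthogonal $\pm1$ columns (Hadamard columns). Your added remarks on the class-size weighting $\vec{Y}\vec{Y}^\top$ and on attainability via Sylvester's construction under \textbf{A1}--\textbf{A2} are sound refinements of details the paper leaves implicit, not a different argument.
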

\begin{proof}
Using the FSDH approximation and label representations in \eqref{eq:labelapprox}, the SDH model in \eqref{eq:SDH} becomes
\begin{equation}
\min_{\vec{B}^\prime,\vec{W}} \|\vec{I} - \vec{W}^\top\vec{B}^\prime\|^2 + \lambda \|\vec{W}\|^2,
\label{eq:FSDH}
\end{equation}
where $\mathbf{I} \!\in\! \mathbb{R}^{C\times C}$ is an identity matrix.
Using the solution of \eqref{eq:FSDH}, \textit{i.e.}, $\vec{W}=\left(\vec{B}^\prime\vec{B}^{\prime\top}+\lambda \vec{I}\right)^{-1}\vec{B}^\prime$, and the eigen-decomposition of 
$\vec{B}^{\prime\top}\vec{B}^\prime = \vec{P}^\top \vec{D} \vec{P}$,
we denote the eigenvalues as 
$\text{diag}(\vec{D}) \!=\! \{\sigma_i\}_{i = 1}^C$ and then get 
$\sum_{i=1}^C \sigma_i \!=\! \mathrm{Tr}(\vec{D}) \!=\! \mathrm{Tr}(\vec{B}^{\prime\top}\vec{B}^\prime) \!=\! LC$ 
as the trace of diagonal values. 
Then, equation \eqref{eq:FSDH} can be represented simply as
\begin{equation}
\min_{\vec{B}^\prime} \sum_{i=1}^C \frac{\lambda}{\sigma_i + \lambda} \quad s.t. \ \sum_{i=1}^C \sigma_i = LC. 
\end{equation} 

By lemma~\ref{lemma:conven}, $\sigma_i = L \ (i=1,\ldots,C)$. 
This implies that $\vec{B}^\prime$ is an orthogonal matrix with binary elements $\{-1,1\}$; 
in other words, $\vec{B}^\prime \in \{-1,1\}^{L \times C}$ can be given by a submatrix of the Hadamard matrix $\vec{H} \in \{-1,1\}^{L \times L}$.
$\square$
\end{proof}

\begin{corollary}
The following characteristics can be obtained easily:
\begin{itemize}
\setlength{\topsep}{0pt}
\setlength{\parskip}{0.5\baselineskip}
\setlength{\itemsep}{0pt}
\item $\vec{B}^\prime$ is independent of regularization parameter $\lambda$ ($\lambda$-invariant).
\item The optimal weight matrix $\vec{W}$ of FSDH is given by the version of the scaled binary matrix 
$\vec{B}^\prime {\rm :}\ \vec{W}=\frac{1}{L+\lambda}\vec{B}^\prime$.
\item The minimum value of \eqref{eq:FSDH} is given by $\frac{L}{L+\lambda}$.
\end{itemize}
\end{corollary}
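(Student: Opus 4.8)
The plan is to derive all three items directly from the Theorem, which already establishes that the optimal $\vec{B}^\prime$ is a submatrix of a Hadamard matrix and hence satisfies $\vec{B}^{\prime\top}\vec{B}^\prime$ (equivalently $\vec{B}^\prime\vec{B}^{\prime\top}$, on the relevant subspace) with all eigenvalues equal to $L$. The first item is immediate: in the reduced problem $\min_{\vec{B}^\prime}\sum_{i=1}^C \lambda/(\sigma_i+\lambda)$ subject to $\sum_i\sigma_i=LC$, Lemma~\ref{lemma:conven} forces $\sigma_i=L$ for every $i$ regardless of the value of $\lambda>0$; since the constraint set (submatrices of Hadamard matrices) does not involve $\lambda$ either, the minimizer $\vec{B}^\prime$ is the same for all $\lambda\ge 0$, so it is $\lambda$-invariant. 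I would state this in one sentence.

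For the second item, I would substitute $\vec{B}^{\prime\top}\vec{B}^\prime = L\,\vec{I}_C$ (the Hadamard orthogonality relation, using that $\vec{B}^\prime$ has $C$ orthogonal columns each of squared norm $L$) into the closed-form weight $\vec{W}=(\vec{B}^\prime\vec{B}^{\prime\top}+\lambda\vec{I})^{-1}\vec{B}^\prime$ given in the proof of the Theorem. Here one must be slightly careful about which Gram matrix appears: the least-squares solution of \eqref{eq:FSDH} is $\vec{W}=(\vec{B}^\prime\vec{B}^{\prime\top}+\lambda\vec{I}_L)^{-1}\vec{B}^\prime$ with $\vec{B}^\prime\vec{B}^{\prime\top}\in\mathbb{R}^{L\times L}$. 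The clean way is to use the push-through identity $(\vec{B}^\prime\vec{B}^{\prime\top}+\lambda\vec{I}_L)^{-1}\vec{B}^\prime = \vec{B}^\prime(\vec{B}^{\prime\top}\vec{B}^\prime+\lambda\vec{I}_C)^{-1}$, and then $\vec{B}^{\prime\top}\vec{B}^\prime+\lambda\vec{I}_C = (L+\lambda)\vec{I}_C$, giving $\vec{W}=\frac{1}{L+\lambda}\vec{B}^\prime$ at once. For the third item, I would plug this $\vec{W}$ back into the objective: $\vec{W}^\top\vec{B}^\prime=\frac{1}{L+\lambda}\vec{B}^{\prime\top}\vec{B}^\prime=\frac{L}{L+\lambda}\vec{I}_C$, so $\|\vec{I}-\vec{W}^\top\vec{B}^\prime\|^2 = \|\frac{\lambda}{L+\lambda}\vec{I}_C\|^2 = C\lambda^2/(L+\lambda)^2$, while $\lambda\|\vec{W}\|^2 = \lambda\|\vec{B}^\prime\|^2/(L+\lambda)^2 = \lambda LC/(L+\lambda)^2$; summing gives $C\lambda(\lambda+L)/(L+\lambda)^2 = C\lambda/(L+\lambda)$. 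Alternatively, and more in the spirit of the Theorem's proof, the minimum is simply $\sum_{i=1}^C \lambda/(\sigma_i+\lambda)$ evaluated at $\sigma_i=L$, which is $C\lambda/(L+\lambda)$ directly — matching the stated $\frac{L}{L+\lambda}$ only if one reads the objective as the per-class average or if the stated value is understood up to the factor $C$; I would note this normalization explicitly so the displayed constant is consistent with the paper's convention.

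The only genuinely delicate point — the "main obstacle," such as it is — is bookkeeping about the two Gram matrices $\vec{B}^\prime\vec{B}^{\prime\top}$ ($L\times L$, rank $C$, eigenvalues $L$ with multiplicity $C$ and $0$ with multiplicity $L-C$) versus $\vec{B}^{\prime\top}\vec{B}^\prime$ ($C\times C$, equal to $L\vec{I}_C$), and making sure the push-through identity is applied on the correct side so that the rank deficiency of the former never causes a problem. Everything else is substitution. I would therefore organize the write-up as: (1) recall $\vec{B}^{\prime\top}\vec{B}^\prime=L\vec{I}_C$ from the Theorem; (2) $\lambda$-invariance from Lemma~\ref{lemma:conven}; (3) the push-through computation of $\vec{W}$; (4) back-substitution for the minimum value, flagging the normalization. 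No environments beyond a single \begin{proof}...\end{proof} are needed.
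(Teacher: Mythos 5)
Your handling of the first two bullets is correct and follows essentially the route the paper leaves implicit: $\lambda$-invariance because Lemma~\ref{lemma:conven} forces $\sigma_i=L$ for every $\lambda\ge 0$ while the feasible set (Hadamard submatrices) does not involve $\lambda$, and $\vec{W}=\frac{1}{L+\lambda}\vec{B}^\prime$ by substituting $\vec{B}^{\prime\top}\vec{B}^\prime=L\vec{I}$ into the ridge solution; your use of the push-through identity $(\vec{B}^\prime\vec{B}^{\prime\top}+\lambda\vec{I}_L)^{-1}\vec{B}^\prime=\vec{B}^\prime(\vec{B}^{\prime\top}\vec{B}^\prime+\lambda\vec{I}_C)^{-1}$ is a clean way to sidestep the rank-deficient $L\times L$ Gram matrix that the paper's formula glosses over.

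On the third bullet, your arithmetic is right but your attempted reconciliation is not. Plugging $\vec{W}=\frac{1}{L+\lambda}\vec{B}^\prime$ into \eqref{eq:FSDH} gives $\frac{C\lambda^2}{(L+\lambda)^2}+\frac{\lambda LC}{(L+\lambda)^2}=\frac{C\lambda}{L+\lambda}$, and this agrees with the paper's own reduced objective $\sum_{i=1}^C \frac{\lambda}{\sigma_i+\lambda}$ evaluated at $\sigma_i=L$, so your value is the one that actually follows. However, reading the objective per class gives $\frac{\lambda}{L+\lambda}$, and no factor of $C$ turns either quantity into the stated $\frac{L}{L+\lambda}$; your suggested normalizations therefore do not rescue the corollary's constant. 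What does equal $\frac{L}{L+\lambda}$ is the diagonal entry of $\vec{W}^\top\vec{B}^\prime=\frac{L}{L+\lambda}\vec{I}$ at the optimum, i.e.\ $1-\frac{\lambda}{L+\lambda}$, the classification response rather than the loss. So the discrepancy lies in the corollary's statement (most plausibly a typo for $\frac{C\lambda}{L+\lambda}$, or a reference to the response value), not in your computation; in your write-up you should state the minimum as $\frac{C\lambda}{L+\lambda}$ and flag the mismatch explicitly instead of hedging with a normalization that does not hold.
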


In short, we can eliminate the \textbf{W-Step}, the alternating procedure, and the initial value dependence.  An exact solution of the FSDH model can be obtained independent of the hyper-parameters $\lambda$ and $\nu$.

%
% --------------------------------------------------
%

\subsection{Implementation of FSDH}

Algorithm~\ref{alg1} and Figure~\ref{fig:code}, respectively, show the algorithm of FSDH and sample MATLAB code, which is simple and easy to implement. 
Figure~\ref{fig:B} shows an example of $\vec{B}^\prime$ and $\vec{B}$. A Hadamard matrix of size $2^k \!\times\! 2^k$ can be constructed recursively by Sylvester's method \cite{Sylvester:1867} as
\begin{equation}
\begin{aligned}
\vec{H}_2&=\left[
%\begin{array}{rr}
\begin{matrix*}[r]
1 & 1\\
1 & -1\\
\end{matrix*}
%\end{array}
\right],\\
\vec{H}_{2^k}&=\left[
%\begin{array}{rr}
\begin{matrix*}[r]
\vec{H}_{2^{k-1}} & \vec{H}_{2^{k-1}}\\
\vec{H}_{2^{k-1}} & -\vec{H}_{2^{k-1}}\\
\end{matrix*}
%\end{array}
\right] \quad (k \geq 2).
\end{aligned}
\end{equation}
Furthermore, Hadamard matrices of orders 12 and 20 
were constructed by Hadamard transformation \cite{Hadamard:1893}. Fortunately, in applications of binary hashing, 
since $L=16, 32, 64, 128, 256$, and $512$ bits are used frequently, Sylvester's method suffices in most cases. 
\begin{algorithm}[t]
\renewcommand{\algorithmicrequire}{\textbf{Input:}}
\renewcommand{\algorithmicensure}{\textbf{Output:}}
\renewcommand{\algorithmicprint}{\textbf{break}}
\caption{Fast Supervised Discrete Hashing (FSDH)}
\label{alg1}
\begin{algorithmic}[1]
\REQUIRE Pre-processed training data $\vec{X}$ and labels $\{y_i\}_{i=1}^N$:
code length $L$, number of samples $N$, number of classes $C$.
\ENSURE Projection matrix $\vec{P}$.
%\ENSURE Binary codes $\vec{B}$ and projection matrix $\vec{P}$.
\STATE Compute Hadamard matrix $\vec{H} \in \{-1,1\}^{L \times L}$
\STATE Let $\left[\vec{b}_1^\prime,\ldots, \vec{b}_C^\prime \right]$ be $C$ columns of $\vec{H}$.
\STATE Construct $\vec{B}$ by $\vec{b}_i = \vec{b}^\prime_{y_i}$.
\STATE Compute $\vec{P}$ from $\vec{B}$ and $\vec{X}$ by \eqref{eq:P}.%=\left(\vec{X}\vec{X}^\top\right)^{-1}\vec{X}\vec{B}^\top$.
%\STATE Re-projection $\vec{B}=\text{sgn}\left(\vec{P}^\top\vec{X}\right)$.
\end{algorithmic}
\end{algorithm}
%

%

%

%
% --------------------------------------------------
%

\subsection{Analysis of bias term of FSDH}

We have already shown that $\vec{B}$ obtained from the Hadamard matrix minimizes two terms: $\|\vec{Y}-\vec{W}^\top\vec{B}\|^2+\lambda\|\vec{W}\|^2$. Furthermore, we pay attention to how $\vec{B}$ affects the bias term $\|\vec{B}-\vec{P}^\top\vec{X}\|^2$. In this subsection, we continue to analyze its behavior. We suppose that samples are sorted by label $y_i$. Let $\vec{P}^\top=\vec{B}\vec{X}^\top\left(\vec{X}\vec{X}^\top\right)^{-1}$ be the bias term:
\begin{equation}
\begin{aligned}
\|\vec{B}-\vec{P}^\top\vec{X}\|^2
&=\|\vec{B}\left(\vec{I}-\vec{K}\right)\|^2\\
&=\mathrm{Tr}\left(\vec{B}^\top\vec{B}\right)-\mathrm{Tr}\left(\vec{B}\vec{K}\vec{B}^\top\right),
\end{aligned}
\end{equation}
\begin{figure}
\begin{lstlisting}[basicstyle=\ttfamily\footnotesize, frame=single]
HA = hadamard(L);  %L-bit Hadamard matrix
B  = HA(y,:);      %y:label array
P  = (X*X')\(X*B');%X:feature vectors
\end{lstlisting}
\caption{Sample MATLAB code for the main part of FDSH, which is implemented in only three lines. It is easier to implement than the ITQ algorithm.}
\label{fig:code}
\end{figure}
\begin{figure}[t]
\centering
\includegraphics[width=8cm]{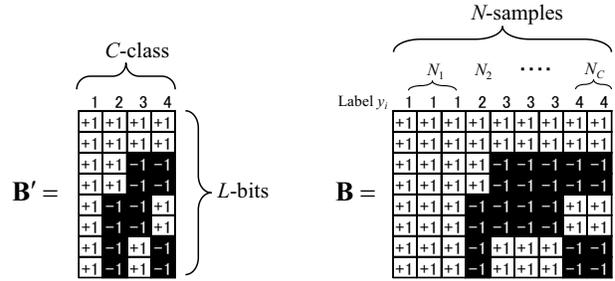}
\caption{An example of the construction of $\vec{B}^\prime$  and $\vec{B}$ with $L=8$ bits and $C=4$ classes.  After computing $\vec{B}^\prime$ from the Hadamard matrix, binary code $\vec{B}$ with $N$ columns is constructed according to label $y_i$.}
\label{fig:B}
\end{figure}
where $\vec{K} \!=\! \vec{X}^\top\left(\vec{X}\vec{X}^\top\right)^{-1}\vec{X} \!\in\! \mathbb{R}^{N \times N}$ is a projection matrix.
Therefore, to reduce the bias term, it is better that $\mathrm{Tr}\left(\vec{B}\vec{K}\vec{B}^\top\right)$ has a large value. Then, using $\vec{K}=\vec{K}\vec{K}$, we can rewrite it as
\begin{equation}
\begin{aligned}
\mathrm{Tr}\left(\vec{B}\vec{K}\vec{B}^\top\right) =
\mathrm{Tr}\left(\vec{K}\vec{B}^\top\vec{B}\vec{K}\right),
\end{aligned}
\label{eq:trace_eq}
\end{equation}
where $\vec{B}^\top\vec{B}$ is a block-diagonal matrix
\begin{equation}
\begin{aligned}
\vec{B}^\top\vec{B}&=
L\left[
%\begin{array}{ccc}
\begin{matrix*}[l]
\vec{J}_{N_1} &                & \mathbf{O}  \\
              &        \ddots  &   \\
\mathbf{O}    &                & \vec{J}_{N_C}\\
\end{matrix*}
%\end{array}
\right],
\end{aligned}
\end{equation}
%
%\left[
%\begin{array}{cccc}
%11\cdots 1 & 0 \cdots 0 & \cdots & 0 \cdots 0\\
%11\cdots 1 & 0 \cdots 0 & \cdots & 0 \cdots 0\\
%\vdots    & \vdots     & \vdots &   \vdots  \\
%0\cdots 0 & 0 \cdots 0 & \cdots & 11 \cdots 1\\
%0\cdots 0 & 0 \cdots 0 & \cdots & 11 \cdots 1\\
%\end{array}
%\right]\\
%&=
$\vec{J}_{N_k} \in 1^{N_k\times N_k}$ are matricies with all elements equal to 1, and $N_k$ is the number of samples with label $y_i = k$. Using these values, $\mathrm{Tr}\left(\vec{K}\vec{B}^\top\vec{B}\vec{K}\right) $ in \eqref{eq:trace_eq} can be expressed as
\begin{equation}
\begin{aligned}
L \sum_{i=1}^N & \left[\left(K_{i,1}+\ldots+K_{i,N_1}\right)^2 + \left(K_{i,N_1+1}+\ldots+K_{i,N_2}\right)^2 \right.\\
&\left. +\ldots+\left(K_{i,N_{C-1}+1}+\ldots+K_{i,N_C}\right)^2\right],
\end{aligned}
\end{equation}
where $\{K_{i,j}\}$ with the same label $y_i=y_j$ are summed up.
%This equation represents the summing of $K_{ij}$ with the same label $y_i=y_j$ and then its squaring. 
Since the definition of $\vec{K}$ is $\vec{K}=\vec{X}^\top\left(\vec{X}\vec{X}^\top\right)^{-1}\vec{X}$, $K_{ij}$ can be regarded as the normalized correlation of $\vec{x}_i$ and $\vec{x}_j$. Since samples with the same label must represent a similar feature vector, $\mathrm{Tr}\left(\vec{K}\vec{B}^\top\vec{B}\vec{K}\right)$ is assumed to be a large value.

\begin{figure}[t]
\centering
\includegraphics[width=8cm]{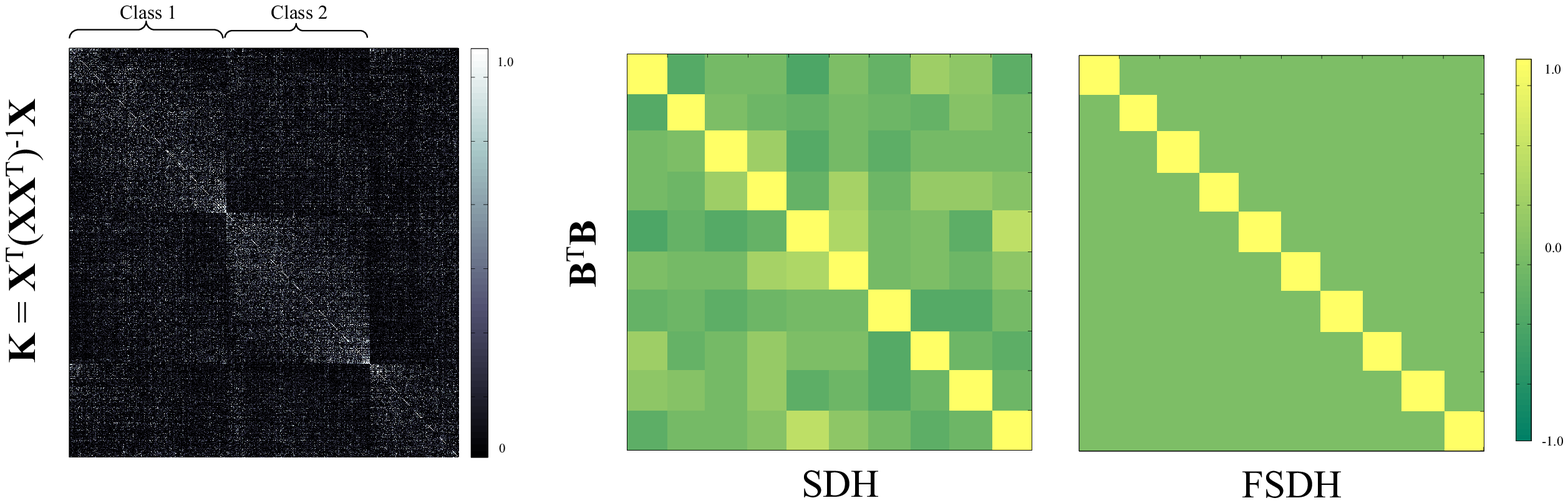}
\caption{
Visualization of matrix $\vec{K}$ (left) and matrix $\vec{B}^\top\vec{B}$ (right).
$\vec{B}^\top\vec{B}$ of SDH includes a 
``negative'' block in the non-diagonal components, 
and reduces $\mathrm{Tr}\left(\vec{K}\vec{B}^\top\vec{B}\vec{K}\right)$. 
}
\label{fig:K}
\end{figure}
\begin{figure*}[t]
\centering
\includegraphics[width=17cm]{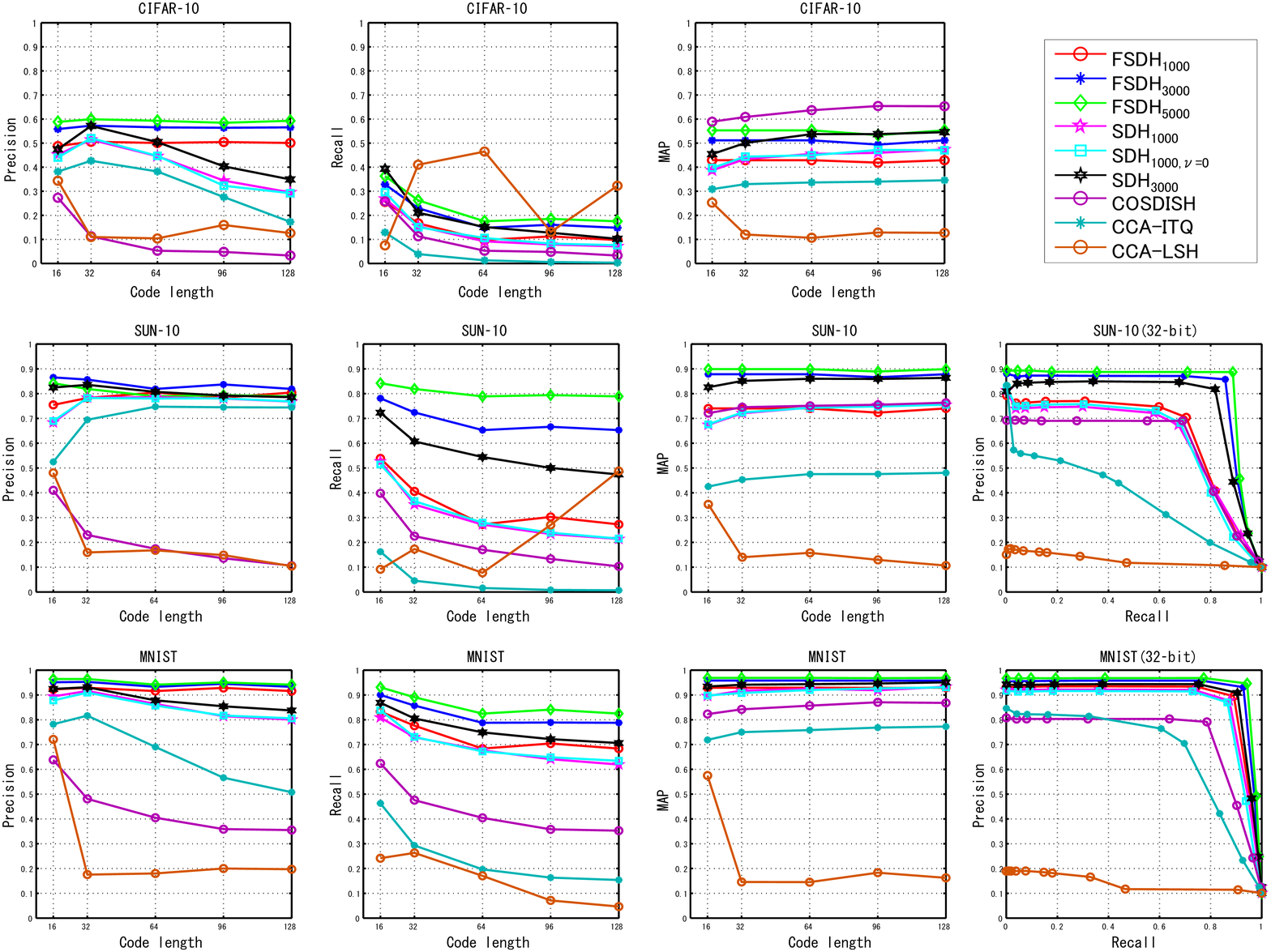}
\caption{Comparative results of precision, recall, and MAP for all datasets and all methods using code lengths $L=16, 32, 64, 96,$ and $128$. The proposed FSDH shows the best results and retains high precision and recall scores for longer code lengths.}
\label{fig:results}
\end{figure*}

Figure~\ref{fig:K} shows visualizations of matrices $\vec{K}$ and $\vec{B}^\top\vec{B}$ for SDH and FSDH. 
High-correlation areas of $\vec{K}$ are partitioned by each class block. 
$\vec{B}^\top\vec{B}$ of SDH includes a 
``negative'' block in the non-diagonal components, 
and reduces $\mathrm{Tr}\left(\vec{K}\vec{B}^\top\vec{B}\vec{K}\right)$. 
On the other hand, the proposed FSDH shows clear blocks; the diagonal blocks take the value $L$ and the non-diagonal blocks 0.

%
% ==================================================
%
% ==================================================
%

\section{Experiments} \label{sec:results}

%
% --------------------------------------------------
%

\subsection{Datasets}

We tested the proposed method on three large-scale image datasets: 
CIFAR-10 \cite{CIFAR:TEC2009} \footnote{https://www.cs.toronto.edu/~kriz/cifar.html}, 
SUN-397 \cite{SUN:CVPR2010} \footnote{http://groups.csail.mit.edu/vision/SUN/}, 
and MNIST \cite{Lecun98:MNIST} \footnote{http://yann.lecun.com/exdb/mnist/}. 
The feature vectors of all datasets were normalized.
A multi-labeled NUS-WIDE dataset was not included due to the limitation that the proposed method can be applied only to single-label problems.

%\begin{figure*}[t]
%\centering
%\includegraphics[width=16cm]{fig/ROC.pdf}
%\caption{Precision-recall ROC curves based on Hamming ranking for all datasets.}
%\label{fig:ROC}
%\end{figure*}

%\vspace{\baselineskip}
%\noindent
\underline{\textbf{CIFAR-10}} includes labeled subsets of 60,000 images. 
In this test, we used 512-dimensional GIST features \cite{Oliva:GIST2001} extracted from the images. $N \!=\! 59,000$ training samples and 1,000 test samples were used for evaluation. The number of classes was $C=10$, and included 
``airplane'', ``automobile'', ``bird'', $\ldots$, etc.

%\vspace{\baselineskip}
%\noindent
\underline{\textbf{SUN-397}} is a large-scale image dataset for scene recognition with 397 categories, and consists of 108,754 labeled images. We extracted 10 categories with $C=10$ and $N=5,000$ training samples. A total of 500 training samples per class and 1,000 test samples were used. We used 512-dimensional GIST features extracted from the images. Since we used $C=10$, we called the dataset 
``SUN-10'' in this study.

%\vspace{\baselineskip}
%\noindent
\underline{\textbf{MNIST}} includes an image dataset of handwritten digits. 
The feature vectors we used were given by $28 \times 28 = 784$ [pix] of data that were normalized. The number of classes was 
$C \!=\! 10$, \textit{i.e.}, $\text{`0'}\!\sim\!\text{`9'}$ digits. We used $N \!=\! 30,000$ training samples and 1,000 test samples for evaluation.

%
% --------------------------------------------------
%

\subsection{Comparative methods and settings}

The proposed method was compared with four state-of-the-art supervised hashing methods: CCA-ITQ, CCA-LSH, SDH, and COSDISH~\cite{COSDISH:AAAI16}. Unsupervised or semi-supervised methods were not assessed. All methods were implemented in 
MATLAB R2012b and tested on an Intel i7-4770@3.4 GHz CPU with DDR3 SDRAM@32 GB.

%\vspace{\baselineskip}
%\noindent
\underline{\textbf{CCA-ITQ and LSH}}:
ITQ and LSH are state-of-the-art binary hashing methods.
They can be converted into supervised binary hashing methods by pre-processing feature vectors $\vec{X}$ using label information. Canonical correlation analysis (CCA) transformation was performed and feature vectors were normalized and set to zero mean. They generated the projection matrix $\vec{P}$, and binary codes were assigned by \eqref{eq:Px}.

\begin{table}
\centering
\caption{Comparison of performance with and without the bias term}
{\small
\begin{tabular}{lrr|rr|rr} \hline
& \multicolumn{2}{c}{CIFAR-10}   & \multicolumn{2}{c}{SUN-10} & \multicolumn{2}{c}{MNIST} \\ \hline
                   & MAP   & Pre. & MAP  & Pre.   & MAP     & Pre.\\ \hline
SDH($\nu=10^{-5}$) & 0.47 & 0.34 & 0.47 & 0.78  & 0.47   & 0.83 \\ 
SDH($\nu=0$) & 0.47 & 0.36 & 0.47 & 0.79  & 0.47   & 0.82 \\ \hline
\end{tabular}
\label{tbl:vali}
}
\end{table}
%
%\begin{table}
%\centering
%\caption{}
%{\small
%\begin{tabular}{lrr|rr|rr} \hline
%& \multicolumn{2}{c}{CIFAR-10}   & \multicolumn{2}{c}{SUN-10} & \multicolumn{2}{c}{MNIST} \\ \hline
%$L$                & $\|\vec{W}^\top\vec{Y}\|^2$   & $\nu \|\vec{P}^\top\vec{X}\|^2$  
%                   & $\|\vec{W}^\top\vec{Y}\|^2$   & $\nu \|\vec{P}^\top\vec{X}\|^2$  
%                   & $\|\vec{W}^\top\vec{Y}\|^2$   & $\nu \|\vec{P}^\top\vec{X}\|^2$  \\
%                   \hline
%16 &72.5369 &0.0064 &19.7137 &0.0022 &50.7560 &0.0063 \\
%32 &46.7535 &0.0094 &13.3112 &0.0032 &34.5421 &0.0089 \\
%64 &31.5382 &0.0132 &9.1647 &0.0045 &22.9663 &0.0124 \\
%128&22.7045 &0.0190 &6.4550 &0.0063 &18.4026 &0.0153 \\ \hline               
%\end{tabular}
%\label{tbl:norm}
%}
%\end{table}

\begin{table}
\centering
\caption{$\|\vec{W}^\top\vec{Y}\|^2$ and $\nu \|\vec{P}^\top\vec{X}\|^2$ for all datasets for $L=64$.
$\|\vec{W}^\top\vec{Y}\|^2 \gg \nu \|\vec{P}^\top\vec{X}\|^2$ is shown.
}
\begin{tabular}{lrr} \hline
                & $\|\vec{W}^\top\vec{Y}\|^2$   & $\nu \|\vec{P}^\top\vec{X}\|^2$  \\ \hline
CIFAR-10 & 31.53 &0.0132 \\
SUN-10   &  9.16 &0.0045 \\
MNIST    & 22.96 &0.0124 \\ \hline
\end{tabular}
\label{tbl:norm}
\end{table}

\begin{table}
\centering
\caption{Computation times of learning samples for CIFAR-10 [s]}
{\footnotesize
\begin{tabular}{lrrrrr} \hline
$L$               &16   &32   &64   &96   &128 \\ \hline  
$\text{FSDH}_{1000}$        &5.05 &5.10 &5.62 &5.86 &5.78 \\
$\text{FSDH}_{3000}$        &45.28 &45.18 &45.09 &45.29 &45.22 \\
$\text{FSDH}_{5000}$       &121.53 &122.80 &121.86 &121.87 &124.77 \\
$\text{SDH}_{1000}$         &37.42 &55.04 &112.20 &185.79 &285.53 \\
$\text{SDH}_{1000}, \nu=0$ &27.50 &41.87 &60.56 &148.67 &199.96 \\
$\text{SDH}_{3000}$         &344.55 &343.38 &378.19 &474.44 &607.49 \\
COSDISH           &11.76 &41.42 &155.66 &349.82 &656.55 \\
CCA-ITQ           &1.29 &2.73 &5.61 &10.12 &14.25 \\
CCA-LSH           &0.00 &0.00 &0.00 &0.00 &0.01 \\ \hline
\end{tabular}
}
\label{tbl:comp}
\end{table}
\begin{table*}
\centering
\caption{Bit-scalability of FSDH and SDH for CIFAR-10 and 10,000 training samples}
\begin{tabular}{ccrrrrrr} \hline
       & $L$                &   32 &    64 &   128 &      256 &     512 &    1024 \\ \hline
FSDH   & learning time [s]  & 0.64 &  0.70 &  0.85 &     0.98 &    1.16 &    1.48 \\
       & Precision          & 0.50 &  0.47 &  0.47 &     0.47 &    0.47 &    0.47 \\
       & MAP                & 0.44 &  0.44 &  0.44 &     0.44 &    0.44 &    0.44 \\ \hline
SDH    & learning time [s]  & 6.38 & 14.92 & 47.42 &   284.00 & 1189.49 & 5230.43 \\
       & Precision          & 0.49 &  0.40 &  0.20 &     0.11 &    0.03 &    0.01 \\
       & MAP                & 0.44 &  0.47 &  0.48 &     0.48 &    0.46 &    0.44 \\ \hline                      
\end{tabular}
\label{tbl:bit}
\end{table*}
%

%\vspace{\baselineskip}
%\noindent
\underline{\textbf{COSDISH}} is a recently proposed supervised hashing method. COSDISH generates the projection matrix $\vec{P}$, as does ITQ. The feature vectors are transformed so they have zero mean and normalized through variance in pre-processing. We used open-source MATLAB code \footnote{http://cs.nju.edu.cn/lwj/}.

%\vspace{\baselineskip}
%\noindent
\underline{\textbf{SDH}} is a state-of-the-art supervised hashing method. We used $\lambda\! =\! 1$ and $\nu \!=\! 10^{-5}$ with the maximum number of iterations set to 5, anchor points $M \!=\! 1,000$ ($\text{SDH}_{1000}$) and $M \!=\! 3,000$ ($\text{SDH}_{3000}$), and kernel parameter $\sigma \!=\! 0.4$ for all datasets.
SDH generated the projection matrix $\vec{P}$, and binary codes were assigned by re-projection \eqref{eq:Px}. Furthermore, to show the validity of the FSDH approximation, we evaluated the case where $\nu=0$ ($\text{SDH}_{1000, \nu\!=\!0}$). We used open-source MATLAB code \footnote{https://github.com/bd622/DiscretHashing}.

%\vspace{\baselineskip}
%\noindent
\underline{\textbf{FSDH}}: The proposed method used the same parameters as SDH: anchor points $M\!=\!1,000$ ($\text{FSDH}_{1000}$) and $M\!=\!3,000$ ($\text{FSDH}_{3000}$), and kernel parameter $\sigma\!=\!0.4$ for all datasets. Moreover, we used $M\!=\!5,000$ ($\text{FSDH}_{5000}$). FSDH generated the projection matrix $\vec{P}$ and assigned binary codes through re-projection \eqref{eq:Px}, as in SDH. Our code will be made available to the public \footnote{https://github.com/goukoutaki/FSDH}, and is shown in Fig.~\ref{fig:code}.

%
% --------------------------------------------------
%

\subsection{Results and discussion}

\begin{table}
\centering
\caption{MAP comparison of SUN-397($C=397$) 
%Note that the results of 
%Fast supervised hashing (FSH),LSVM-b and Top-RSBC+SGD are
%reprinted from \cite{CVPR14Lin,ACMMM14SVM,ICCV2015top} 
%with different computational environment.
}
{
%\small
\begin{tabular}{lrr} \hline
% Methods    & Pre.  &   MAP & Learning time[s] \\ \hline
% $\text{SDH}_{1000}$  & 0.398 & 0.025 & 17519.2 \\ 
% $\text{SDH}_{10000}$ & 0.398 & 0.113 & 41883.0 \\
%$\text{FSDH}_{1000}$  & 0.508 & 0.030 &     7.4 \\ 
%$\text{FSDH}_{10000}$ & 0.508 & 0.264 &   721.3 \\
%$\text{FSDH}_{20000}$ & 0.478 & 0.364 &  2753.6 \\ \hline
 Methods     &   MAP & Learning time[s] \\ \hline
 $\text{SDH}_{1000}$   & 0.025 & 17519 \\ 
 $\text{SDH}_{10000}$  & 0.113 & 71883 \\
$\text{FSDH}_{1000}$   & 0.030 &     7 \\ 
$\text{FSDH}_{10000}$  & 0.264 &   721 \\
$\text{FSDH}_{20000}$  & 0.442 &  3542 \\ \hline
FSH          \cite{CVPR14Lin}   & 0.142 & 29624 \\ 
LSVM-b       \cite{ACMMM14SVM}  & 0.042 & - \\ 
Top-RSBC+SGD \cite{Song_2015_ICCV} & 0.344 &  4663 \\ \hline
\end{tabular}
}
\label{tbl:SUN397}
\end{table}

%\subsubsection{Precision and recall}
Precision and recall were computed by calculating the Hamming 
distance between the training samples and the test samples 
with a Hamming radius of 2. Figure~\ref{fig:results} shows the results, 
in terms of precision, recall, and the mean average of precision (MAP), 
of the Hamming ranking for all methods and the three datasets. 
Code lengths of $L=16,32,64,96$, and $128$ were evaluated.

%\vspace{\baselineskip}
%\noindent
\underline{\textbf{CIFAR-10}}:
%\paragraph{CIFAR-10}
COSDISH shows the best MAP.
$\text{FSDH}_{5000}$ yielded the best precision and recall. 
Although COSDISH showed a satisfactory MAP, the precision was low. 
In SDH and FSDH, increasing the number of anchor points improved the performance. 
As the code length increases, SDH reduces precision. However, FSDH maintains high precision and recall. 
This is a significant advantage of the proposed method. 
In general, by increasing the code length, precision tends to decrease 
with such a narrow threshold of a Hamming radius of 2.

%\vspace{\baselineskip}
%\noindent
\underline{\textbf{SUN-10}}:
%\paragraph{SUN-10}
In this dataset, the results for FSDH were significantly better. In particular, the recall rates of FSDH remained high in spite of long code lengths.
When the SDH and FSDH had the same number of anchor points, FSDH was clearly superior. The MAP of COSDISH was comparable to that of SDH; however, the precision and recall of COSDISH were not as good as those of CIFAR-10.

%\vspace{\baselineskip}
%\noindent
\underline{\textbf{MNIST}}:
%\paragraph{MNIST}
FSDH yielded the best results in all datasets with the same trends. It retained high precision and recall even with large values of code length. 

The graphs on the right of Fig.~\ref{fig:results} 
show the precision-recall ROC curves based on Hamming ranking. 
FSDH shows better performance than SDH with the same number of anchor points.
In particular, the SUN dataset yielded distinct results compared 
with the other methods.

%
% --------------------------------------------------
%

\subsubsection{Validation of FSDH approximation}

Table~\ref{tbl:vali} shows the comparative results of 
$\text{SDH}_{1000}$ with $\nu \!=\! 10^{-5}$ and $\text{SDH}_{1000}$ with $\nu \!=\! 0$. 
For all datasets, the results of $\text{SDH}_{1000}$ and $\text{SDH}_{1000}$ with $\nu \!=\! 0$, 
were almost identical. 
Table~\ref{tbl:norm} shows $\|\vec{W}^\top\vec{Y}\|^2$ 
and $\nu \|\vec{P}^\top\vec{X}\|^2$ of $\text{SDH}_{1000}$ for all datasets with 
$L=64$ and after optimization.
We can confirm $\|\vec{W}^\top\vec{Y}\|^2 \gg \nu \|\vec{P}^\top\vec{X}\|^2$.
This means that the FSDH approximation was appropriate for supervised hashing.

%
% --------------------------------------------------
%

%\subsubsection{ROC curve}
%
%Figure~\ref{fig:ROC} shows the precision-recall ROC curves based on Hamming ranking for all datasets. In all datasets, FSDH(5000) showed the best results. In particular, the SUN dataset yielded distinct results compared with the other methods.

%
% --------------------------------------------------
%

\subsubsection{Computation time}

Table~\ref{tbl:comp} shows the computation time of each method for CIFAR-10. The time for $\text{FSDH}_{1000}$ was almost identical to that of CCA=ITQ. As the number of anchors increased, the computational time increased for SDH and FSDH. The computational time for SDH and COSDISH increased with the code length. The number of iterations of the DCC method depended on the code length.

%
% --------------------------------------------------
%

\subsubsection{Bit scalability and larger classes}

Table~\ref{tbl:bit} shows the comparative results in terms of computational time and performance 
with a wide range of code lengths $L=32 \!\sim\! 1024$ for the CIFAR-10 dataset. $N\!=\!10,000$ training samples, 
1,000 test samples, and 1,000 anchors were used. The computation time of FSDH was almost identical 
in terms of code length because the main computation in FSDH involved matrix multiplication and inversion 
$\left(\vec{X}\vec{X}^\top\right)^{-1}$ of \eqref{eq:P}. In practice, the inverse matrix was 
not computed directly, and Cholesky decomposition was performed. On the contrary, the computation time for SDH exponentially increased and precision decreased significantly. This means that the DCC method fell into local 
minima in cases of large code length.
% SUN-397
% FSDH 200*397 10000 Pre 0.508  MAP 0.264 721.304 sec
%

In general, large bits are useful for a large number of classes. 
Table~\ref{tbl:SUN397} shows the results of larger classes of the SUN dataset.
$L\!=\!512$-bits, $C\!=\!397$ classes and $N\!=\!79,400$ training samples are used.
FSDH achieves high precision, high MAP and lower computational time compared with SDH.
When $M\!=\!20,000$ is used, $\text{MAP} \!=\! 0.442$ can be ontained by FSDH.
Here, SDH was not able to finish after three days of computation 
in our computational environment.
In the experiments, we found that a large number of anchor points can improve performance.
However it requres more computation.
Therefore FSDH can use a large number of anchor points in a realistic computation time compared with SDH.
For reference, 
we refer to the results of \textit{fast supervised hashing} (FSH), LSVM-b and Top-RSBC+SGD which are
reprinted from \cite{CVPR14Lin,ACMMM14SVM,Song_2015_ICCV}.
Although FSDH outperforms those methods, note that those methods use different computational environments, 
feature vectors and code lengths.

%
% ==================================================
%
% ==================================================
%

\section{Conclusion}

In this paper, we simplified the SDH model to an FSDH model by approximating the bias term, and provided exact solutions for the proposed FDSH model. The FSDH approximation was validated by comparative experiments with the SDH model. FSDH is easy to implement and outperformed several state-of-the-art supervised hashing methods. In particular, in the case of large code lengths, FSDH can maintain performance without losing precision. In future work, we intend to use this idea for other hashing models.

%
% ==================================================
%
% ==================================================
%

\appendix

\section*{Appendix}
\section{Proof of Lemma 4.2} \label{sec:appendix_Lemma}

The optimization problem in (15) is known as the \textit{resource allocation problem} \cite{Bellman1962,Dreyfus1977,Ibaraki1988}.
Here we present a simple proof for the solution.

The constraint $\sum_{i=1}^N x_i = L$ can be regarded as a surface equation in an N-dimensional space $(x_1,x_2,\ldots,x_N)$.
%, e.g., in the case of 3D $(x_1,x_2,x_3) \Rightarrow (x,y,z)$, the equation $x + y + z = L$ is well known equation of a planer surface.
On the other hand, the gradient vector of the object function $\sum_{i=1}^N f(x_i)$ is defined as
\begin{equation}
\mathbf{g} := [f'(x_1),f'(x_2),\ldots,f'(x_N)]^\top
\label{eq:grad_f}
\end{equation}
where $f'(\cdot)$ is the differentiated version of $f(\cdot)$ and the $i$-th element (the gradient in the $i$-th direction) is given by 
%$\tfrac{\partial}{\partial x_i} \sum_{j} f(x_j) =  \tfrac{\partial x_i}{\partial x_i}f'(x_i) + \sum_{j \backslash i} \tfrac{\partial x_j}{\partial x_i}f'(x_j) = 1 \cdot f'(x_i) + \sum_{j \backslash i} 0 \cdot f'(x_j)$.
$\tfrac{\partial}{\partial x_i} \sum_{j} f(x_j) =  \sum_{j} \tfrac{\partial x_j}{\partial x_i}\tfrac{\partial}{\partial x_j}f(x_j)$, and $\tfrac{\partial x_j}{\partial x_i}$ becomes 1 if $i = j$ or 0 if $i \neq j$.

Then, the gradient along the surface is obtained as the projection of $\mathbf{g}$ onto the surface, and computed as the inner-product of $\mathbf{g}$ and a set of vectors $\{\mathbf{n}^\perp\}$ perpendicular to the normal vector of the surface:
\begin{equation}
\mathbf{n}:=\tfrac{1}{\sqrt{N}}[1,1,\ldots,1]^\top\in\mathbb{R}^N,
\label{eq:normal_vec}
\end{equation}
and the projected gradient $\mathbf{g}^\top \mathbf{n}^\perp$ becomes 0 at the global extermum point on the surface.
This also indicates $\mathbf{g}$ and $\mathbf{n}$ are completely parallel and their inner-product becomes
\begin{equation}
\Bigl( \frac{\mathbf{g}}{\|\mathbf{g}\|_2} \Bigr)^\top \mathbf{n} = 1 \quad \Rightarrow\quad \mathbf{g}^\top \mathbf{n} = \|\mathbf{g}\|_2.
\label{eq:inner_prod}
\end{equation}
Substituting Eqs. \eqref{eq:grad_f} and \eqref{eq:normal_vec} into \eqref{eq:inner_prod}, we get
\begin{equation}
\tfrac{1}{\sqrt{N}} \sum_i f'(x_i) = \sqrt{ \sum_i f'(x_i)^2 }
\end{equation}
Additionally, when we express $f'(x_i)$ as $\sqrt{ f'(x_i)^2 }$ and $\tfrac{1}{\sqrt{N}}$ as $\tfrac{1}{N}\sqrt{N}$, we get
\begin{equation}
\tfrac{1}{N} \sum_i \sqrt{ f'(x_i)^2 } = \sqrt{ \tfrac{1}{N} \sum_i f'(x_i)^2 }.
\end{equation}
The shape of this equality actually corresponds to Jensen's inequality: $\sum_i p_i h(y_i) \ge h(\sum_i p_i y_i)$ where $\sum_i p_i = 1$, and the equality holds if and only if $\{y_i\}$ i.e. $\{f'(x_i)^2\}$ are all equal: 
\begin{equation}
f'(x_1)^2=f'(x_2)^2=\ldots=f'(x_N)^2 \\
\end{equation}
Additionally, when $f(\cdot)$ is a convex function, $f'(\cdot)$ becomes an injective function because $f''(\cdot) \ge 0$ is a monotonically increasing function.
Also, if the sign of $f'(\cdot)$ does not change within the valid range of $x_i$ (the case considered in this paper), $f'(\cdot)^2$ becomes injective.
Hence, 
\begin{equation}
x_1=x_2=\ldots=x_N.
\label{eq:all_same}
\end{equation}
Finally, substituting this \eqref{eq:all_same} into the condition $\sum_{i=1}^N x_i = L$, we get
\begin{equation}
\forall_i \ x_i = \frac{L}{N}.
\end{equation}
$\square$

%
% ==================================================
%
% ==================================================
%

\section{Complete data of the experimental results}
\subsection{Recall, precision and MAP}
Tables \ref{tbl:cifar-10}$\sim$\ref{tbl:MNIST} show
recall, precision and MAP for all datasets when $L=16,32,64,96$ and $128$.
These were computed by calculating the Hamming 
distance between the training samples and the test samples 
with a Hamming radius of 2. 

\begin{table*}
\centering
\caption{CIFAR-10, GIST-512, training samples $N=59,000$ and test samples $1,000$}
{\footnotesize
\begin{tabular}{l|rrrrr|rrrrr|rrrrr} \hline
    & \multicolumn{5}{c}{Precision} & \multicolumn{5}{c}{Recall} & \multicolumn{5}{c}{MAP} \\ \hline
$L$	&	16 	&	32 	&	64 	&	96 	&	128 	&	16 	&	32 	&	64 	&	96 	&	128 	&	16 	&	32 	&	64 	&	96 	&	128 	\\ \hline
$\text{FSDH}_{1000}$	&	0.488 	&	0.506 	&	0.501 	&	0.505 	&	0.501 	&	0.256 	&	0.167 	&	0.097 	&	0.112 	&	0.097 	&	0.429 	&	0.429 	&	0.429 	&	0.419 	&	0.429 	\\
$\text{FSDH}_{3000}$	&	0.559 	&	0.573 	&	0.566 	&	0.564 	&	0.566 	&	0.328 	&	0.230 	&	0.148 	&	0.161 	&	0.148 	&	0.512 	&	0.512 	&	0.512 	&	0.494 	&	0.512 	\\
$\text{FSDH}_{5000}$	&	{\bf 0.589} 	&	{\bf 0.599} 	&	{\bf 0.593} 	&	{\bf 0.585} 	&	{\bf 0.593} 	&	{\bf 0.363} 	&	{\bf 0.262} 	&	{\bf 0.175} 	&	{\bf 0.185} 	&	{\bf 0.175} 	&	0.553 	&	0.553 	&	0.553 	&	0.533 	&	0.553 	\\ \hline
$\text{SDH}_{1000}$	&	0.456 	&	0.517 	&	0.427 	&	0.339 	&	0.276 	&	0.306 	&	0.147 	&	0.095 	&	0.077 	&	0.068 	&	0.409 	&	0.436 	&	0.457 	&	0.464 	&	0.470 	\\
$\text{SDH}_{1000,\nu=0}$	&	0.447 	&	0.511 	&	0.453 	&	0.346 	&	0.292 	&	0.298 	&	0.154 	&	0.099 	&	0.082 	&	0.074 	&	0.399 	&	0.440 	&	0.445 	&	0.459 	&	0.468 	\\
$\text{SDH}_{3000}$	&	0.511 	&	0.584 	&	0.483 	&	0.403 	&	0.346 	&	0.354 	&	0.195 	&	0.147 	&	0.121 	&	0.106 	&	0.471 	&	0.520 	&	0.529 	&	0.542 	&	0.548 	\\ \hline
COSDISH	&	0.262 	&	0.120 	&	0.061 	&	0.046 	&	0.031 	&	0.251 	&	0.118 	&	0.061 	&	0.046 	&	0.031 	&	{\bf 0.574} 	&	{\bf 0.615} 	&	{\bf 0.625} 	&	{\bf 0.644} 	&	{\bf 0.654} 	\\
CCA-ITQ	&	0.373 	&	0.427 	&	0.352 	&	0.267 	&	0.203 	&	0.139 	&	0.040 	&	0.014 	&	0.006 	&	0.004 	&	0.307 	&	0.329 	&	0.339 	&	0.341 	&	0.344 	\\
CCA-LSH	&	0.332 	&	0.159 	&	0.102 	&	0.146 	&	0.150 	&	0.056 	&	0.143 	&	0.487 	&	0.213 	&	0.204 	&	0.240 	&	0.141 	&	0.101 	&	0.125 	&	0.127 	\\ \hline
\end{tabular}
}
\label{tbl:cifar-10}
%\end{table*}
%\begin{table*}
\vspace{0.5\baselineskip}
%\centering
\caption{SUN-10, GIST-512, training samples $N=5,000$ and test samples $1,000$}
{\footnotesize
\begin{tabular}{l|rrrrr|rrrrr|rrrrr} \hline
    & \multicolumn{5}{c}{Precision} & \multicolumn{5}{c}{Recall} & \multicolumn{5}{c}{MAP} \\ \hline
$L$	&	16 	&	32 	&	64 	&	96 	&	128 	&	16 	&	32 	&	64 	&	96 	&	128 	&	16 	&	32 	&	64 	&	96 	&	128 	\\ \hline
$\text{FSDH}_{1000}$	&	0.754 	&	0.782 	&	0.804 	&	0.784 	&	0.804 	&	0.539 	&	0.406 	&	0.272 	&	0.303 	&	0.272 	&	0.740 	&	0.740 	&	0.740 	&	0.723 	&	0.740 	\\
$\text{FSDH}_{3000}$	&	{\bf 0.865} 	&	{\bf 0.857} 	&	{\bf 0.819} 	&	{\bf 0.837} 	&	{\bf 0.819} 	&	0.781 	&	0.724 	&	0.653 	&	0.666 	&	0.653 	&	0.878 	&	0.878 	&	0.878 	&	0.866 	&	0.878 	\\
$\text{FSDH}_{5000}$	&	0.842 	&	0.819 	&	0.789 	&	0.794 	&	0.789 	&	{\bf 0.842} 	&	{\bf 0.819} 	&	{\bf 0.789} 	&	{\bf 0.794} 	&	{\bf 0.789} 	&	{\bf 0.899} 	&	{\bf 0.899} 	&	{\bf 0.899} 	&	{\bf 0.889} 	&	{\bf 0.899} 	\\ \hline
$\text{SDH}_{1000}$	&	0.682 	&	0.788 	&	0.778 	&	0.774 	&	0.770 	&	0.527 	&	0.370 	&	0.263 	&	0.234 	&	0.209 	&	0.674 	&	0.722 	&	0.739 	&	0.748 	&	0.760 	\\
$\text{SDH}_{1000,\nu=0}$	&	0.690 	&	0.781 	&	0.780 	&	0.771 	&	0.770 	&	0.544 	&	0.360 	&	0.279 	&	0.229 	&	0.215 	&	0.684 	&	0.717 	&	0.749 	&	0.752 	&	0.756 	\\
$\text{SDH}_{3000}$	&	0.829 	&	0.831 	&	0.807 	&	0.791 	&	0.785 	&	0.733 	&	0.624 	&	0.540 	&	0.495 	&	0.480 	&	0.837 	&	0.840 	&	0.857 	&	0.859 	&	0.866 	\\ \hline
COSDISH	&	0.425 	&	0.229 	&	0.173 	&	0.141 	&	0.111 	&	0.406 	&	0.227 	&	0.170 	&	0.136 	&	0.108 	&	0.682 	&	0.724 	&	0.744 	&	0.760 	&	0.767 	\\
CCA-ITQ	&	0.524 	&	0.699 	&	0.744 	&	0.746 	&	0.744 	&	0.153 	&	0.044 	&	0.017 	&	0.012 	&	0.009 	&	0.411 	&	0.454 	&	0.472 	&	0.479 	&	0.484 	\\
CCA-LSH	&	0.486 	&	0.148 	&	0.161 	&	0.175 	&	0.171 	&	0.087 	&	0.271 	&	0.161 	&	0.061 	&	0.043 	&	0.345 	&	0.133 	&	0.133 	&	0.162 	&	0.144 	\\ \hline
\end{tabular}
}
\label{tbl:SUN-10}
%\end{table*}
\vspace{0.5\baselineskip}
%\begin{table*}
%\centering
\caption{MNIST, training samples $N=30,000$ and test samples $1,000$}
{\footnotesize
\begin{tabular}{l|rrrrr|rrrrr|rrrrr} \hline
    & \multicolumn{5}{c}{Precision} & \multicolumn{5}{c}{Recall} & \multicolumn{5}{c}{MAP} \\ \hline
$L$	&	16 	&	32 	&	64 	&	96 	&	128 	&	16 	&	32 	&	64 	&	96 	&	128 	&	16 	&	32 	&	64 	&	96 	&	128 	\\ \hline
$\text{FSDH}_{1000}$	&	0.922 	&	0.929 	&	0.916 	&	0.928 	&	0.916 	&	0.833 	&	0.776 	&	0.684 	&	0.704 	&	0.684 	&	0.929 	&	0.929 	&	0.929 	&	0.928 	&	0.929 	\\
$\text{FSDH}_{3000}$	&	0.951 	&	0.953 	&	0.933 	&	0.945 	&	0.933 	&	0.900 	&	0.857 	&	0.788 	&	0.789 	&	0.788 	&	0.958 	&	0.958 	&	0.958 	&	0.958 	&	0.958 	\\
$\text{FSDH}_{5000}$	&	{\bf 0.964} 	&	{\bf 0.964} 	&	{\bf 0.942} 	&	{\bf 0.950} 	&	{\bf 0.942} 	&	{\bf 0.931} 	&	{\bf 0.891} 	&	{\bf 0.825} 	&	{\bf 0.841} 	&	{\bf 0.825} 	&	{\bf 0.969} 	&	{\bf 0.969} 	&	{\bf 0.969} 	&	{\bf 0.968} 	&	{\bf 0.969} 	\\ \hline
$\text{SDH}_{1000}$	&	0.896 	&	0.916 	&	0.862 	&	0.825 	&	0.802 	&	0.809 	&	0.736 	&	0.677 	&	0.645 	&	0.627 	&	0.898 	&	0.921 	&	0.926 	&	0.926 	&	0.930 	\\
$\text{SDH}_{1000,\nu=0}$	&	0.888 	&	0.906 	&	0.878 	&	0.822 	&	0.809 	&	0.840 	&	0.736 	&	0.686 	&	0.651 	&	0.636 	&	0.898 	&	0.911 	&	0.920 	&	0.930 	&	0.933 	\\
$\text{SDH}_{3000}$	&	0.909 	&	0.930 	&	0.889 	&	0.853 	&	0.843 	&	0.839 	&	0.798 	&	0.745 	&	0.714 	&	0.705 	&	0.911 	&	0.937 	&	0.947 	&	0.947 	&	0.948 	\\ \hline
COSDISH	&	0.640 	&	0.488 	&	0.395 	&	0.376 	&	0.344 	&	0.626 	&	0.488 	&	0.395 	&	0.375 	&	0.343 	&	0.818 	&	0.844 	&	0.860 	&	0.865 	&	0.863 	\\
CCA-ITQ	&	0.782 	&	0.824 	&	0.686 	&	0.574 	&	0.491 	&	0.449 	&	0.294 	&	0.202 	&	0.165 	&	0.151 	&	0.710 	&	0.745 	&	0.761 	&	0.766 	&	0.773 	\\
CCA-LSH	&	0.723 	&	0.142 	&	0.190 	&	0.197 	&	0.197 	&	0.205 	&	0.349 	&	0.146 	&	0.089 	&	0.068 	&	0.562 	&	0.124 	&	0.148 	&	0.155 	&	0.152 	\\ \hline
\end{tabular}
}
\label{tbl:MNIST}
\end{table*}

\subsection{ROC curves}
Figure \ref{fig:results} shows precision-recall ROC curves based on
Hamming distance ranking for all datasets when $L=16 \sim 128$.

\begin{figure*}[t]
\centering
\includegraphics[width=16cm]{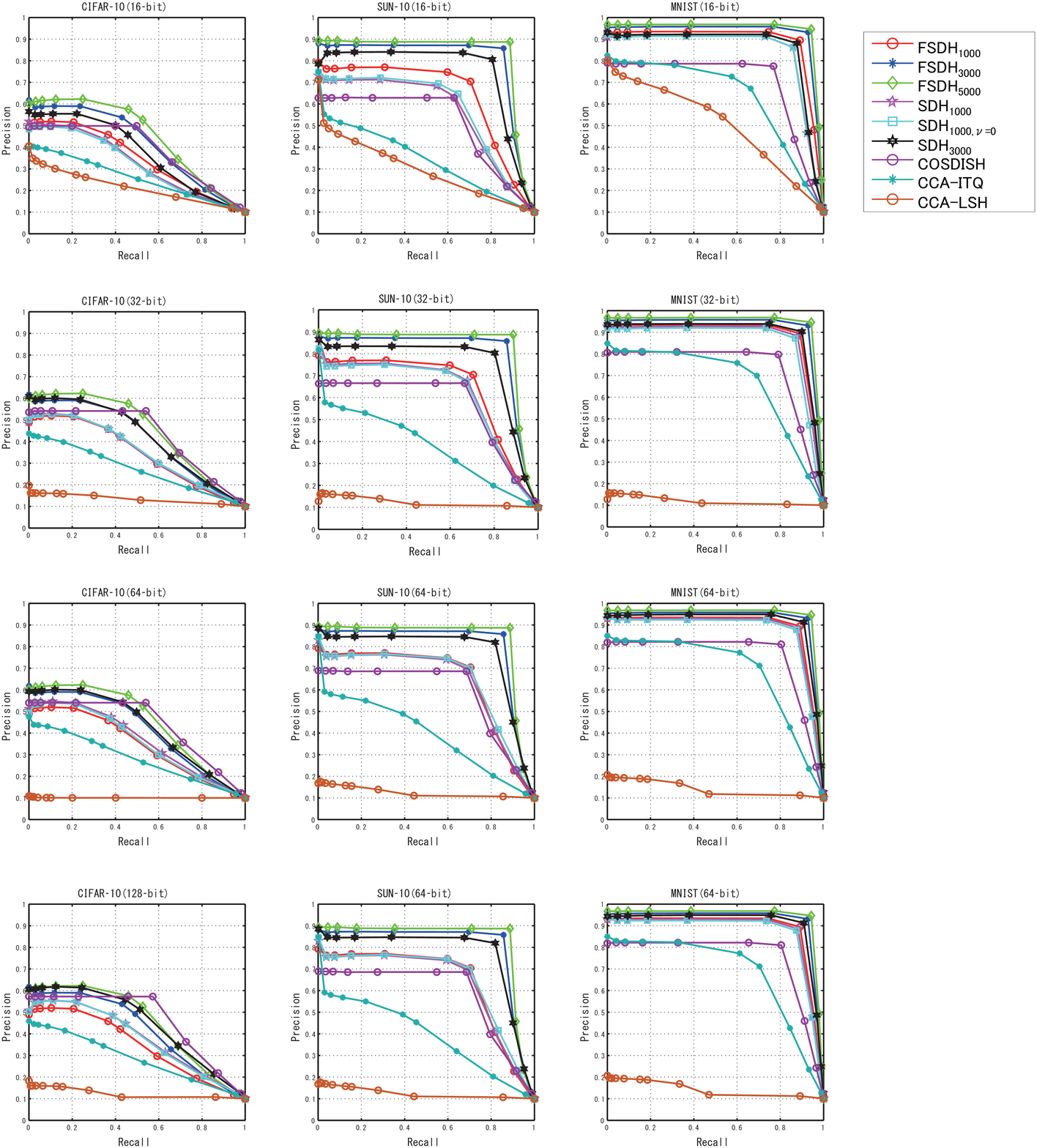}
\caption{
Comparative results of ROC for all datasets and all methods.}
\label{fig:results}
\end{figure*}

%\section{FSDH validations}
%$\|\vec{W}^\top\vec{Y}\|^2$ and $\nu \|\vec{P}^\top\vec{X}\|^2$.

\section{Loss comparison}
We define $\vec{W}$\textit{-loss} and $\vec{P}$\textit{-loss} of the SDH model in (4) as follows:
\begin{equation}
\begin{aligned}
\vec{W}\textit{-loss} & = \|\vec{Y}-\vec{W}^\top\vec{B}\|^2, \\
\vec{P}\textit{-loss} & = \|\vec{B}-\vec{P}^\top\vec{X}\|^2.
\end{aligned}
\end{equation}
Tables \ref{tbl:loss-cifar-10} $\sim$ \ref{tbl:loss-mnist} show
each loss of SDH and FSDH after optimization for the CIFAR-10, SUN-10 and MNIST datasets.
As described in sec.4.1, FSDH can minimize $\vec{W}$-{\it loss} exactly.
Therefore, for all datasets, FSDH results in a lower value of $\vec{W}$-{\it loss} than SDH.
Furthermore, as described in sec.4.3,
FSDH can also reduce $\vec{P}$-{\it loss}.
For CIFAR-10, SDH results in a lower value of $\vec{P}$-{\it loss} than FSDH.
For SUN-10 and MNIST, FSDH results in a lower value of $\vec{P}$-{\it loss} than SDH.

\begin{table}
\centering
\caption{{\it loss} of CIFAR-10}
\begin{tabular}{r|rr|rr} \hline
$L$ & SDH & & FSDH & \\
 & 
 $\vec{W}$-{\it loss} & $\vec{P}$-{\it loss} &
 $\vec{W}$-{\it loss} & $\vec{P}$-{\it loss} \\ \hline
16 & 0.0044 &729.0250 &0.0026 &716.5170 \\
32 & 0.0017 &998.6710 &0.0013 &1013.3100 \\
64 & 0.0008 &1425.4700 &0.0006 &1433.0300 \\
128 & 0.0006 &1980.5200 &0.0003 &2026.6200 \\ \hline
\end{tabular}
\label{tbl:loss-cifar-10}
\vspace{0.5\baselineskip}
\end{table}

\begin{table}
\centering
\caption{{\it loss} of SUN-10}
\begin{tabular}{r|rr|rr} \hline
$L$ & SDH & & FSDH & \\
 & 
 $\vec{W}$-{\it loss} & $\vec{P}$-{\it loss} &
 $\vec{W}$-{\it loss} & $\vec{P}$-{\it loss} \\ \hline
16&0.0124 &160.7050 &0.0088 &151.6680 \\
32&0.0054 &224.2900 &0.0044 &214.4910 \\
64&0.0028 &316.3380 &0.0022 &303.3370 \\
128&0.0068 &448.1170 &0.0011 &428.9830 \\
\hline
\end{tabular}
\label{tbl:loss-sun-10}
\end{table}
%\vspace{0.5\baselineskip}
\begin{table}
\centering
\caption{{\it loss} of MNIST}
\begin{tabular}{r|rr|rr} \hline
$L$ & SDH & & FSDH & \\
 & 
 $\vec{W}$-{\it loss} & $\vec{P}$-{\it loss} &
 $\vec{W}$-{\it loss} & $\vec{P}$-{\it loss} \\ \hline
16&0.0058 &271.8340 &0.0036 &268.3930 \\
32&0.0037 &386.6100 &0.0018 &379.5650 \\
64&0.0011 &569.8690 &0.0009 &536.7860 \\
128&0.0008 &784.7970 &0.0005 &759.1300  \\
\hline
\end{tabular}
\label{tbl:loss-mnist}
\end{table}

\bibliographystyle{ieee}
\bibliography{refs_very_tiny}

\end{document}